\newcommand{\sign}{\textup{\textsf{sign}}}
\newcommand{\Appendix}[1]{the full version for}
\newtheorem{theorem}{Theorem}[section]
\newtheorem{lemma}[theorem]{Lemma}
\newtheorem{proposition}[theorem]{Proposition}
\newtheorem{remark}{Remark}
\newtheorem{example}{Example}
\newtheorem{definition}{Definition}
\newcommand{\z}{\mathbf{z}}
\newcommand{\I}{\mathbf{I}}
\newcommand{\R}{\mathbb{R}}
\renewcommand{\comment}[1]{}
\newcommand{\cD}{\mathcal{D}}
\newcommand{\cF}{\mathcal{F}}
\newcommand{\cG}{\mathcal{G}}
\newcommand{\cH}{\mathcal{H}}
\newcommand{\cL}{\mathcal{L}}
\newcommand{\cP}{\mathcal{P}}
\newcommand{\cY}{\mathcal{Y}}
\newcommand{\cX}{\mathcal{X}}
\newcommand{\bbE}{\mathbb{E}}
\definecolor{colorY}{rgb}{0.7 , 0.7 , 0.2}
\DeclareMathOperator*{\argmax}{argmax}
\title{Learning accurate and interpretable  tree-based models}
\author{{Maria-Florina Balcan}$^*$ \hspace{1cm}    {Dravyansh Sharma}\footnote{Carnegie Mellon University. Correspondence: dravyans@cs.cmu.edu}}
\date{}
\begin{document}
\maketitle

\begin{abstract}
Decision trees and their ensembles are  popular  in machine learning as easy-to-understand models. Several techniques have been proposed in the literature for learning  tree-based classifiers, with different techniques working well for data from different domains. In this work, we develop approaches to design  tree-based learning algorithms given repeated access to data from the same domain. We study multiple formulations covering different aspects and popular techniques for learning decision tree based approaches. We propose novel parameterized classes of node splitting criteria in top-down algorithms, which interpolate between popularly used entropy and Gini impurity based criteria, and provide theoretical bounds on the number of samples needed to learn the splitting function appropriate for the data at hand. We also study the sample complexity of tuning prior parameters in Bayesian decision tree learning, and extend our results to decision tree regression. We further consider the problem of tuning hyperparameters in pruning the decision tree for classical pruning algorithms including min-cost complexity pruning. {In addition, our techniques can be used to optimize the explainability versus accuracy trade-off when using decision trees. We extend our results to tuning popular tree-based ensembles, including random forests and gradient-boosted trees. We demonstrate the significance of our approach on real world datasets by learning data-specific decision trees which are simultaneously more accurate and interpretable.}
\end{abstract}

\section{Introduction}

Decision trees are used widely in operations research, management science, data mining, and machine learning. They are easy to use and understand, and explicitly include the decision rules used in making predictions. Each decision rule is a simple comparsion of a real-valued attribute to a threshold or a categorical attribute against a value from some candidate set. Given their remarkable simplicity, decision trees are widely preferred in  applications where it is important to justify algorithmic decisions with intuitive explanations \cite{rudin2018please}. Furthermore, gradient-boosted decision tree ensembles achieve the state-of-the-art accuracy on tabular data, beating even deep neural networks \cite{grinsztajn2022tree,shwartz2022tabular,mcelfresh2023neural}. However, decades of research on decision trees and their ensembles has resulted in a large suite of candidate approaches for building decision tree models \cite{breiman1984classification,mingers1987expert,quinlan2014c4,quinlan1996learning,kearns1996boosting,mansour1997pessimistic,maimon2014data,chen2016xgboost,prokhorenkova2018catboost}. This raises an important question: how should one select the best approach to build a decision tree model for the relevant problem domain?

The problem of learning optimal decision trees is known to be NP-complete \cite{laurent1976constructing}, and it is unlikely that computationally efficient algorithms exist for learning even approximately optimal trees \cite{koch2024fast,koch2024superconstant}. As such, various different ways have been proposed to build decision trees and their ensembles without any optimality guarantees, and extensive empirical studies show that there is no ``best'' approach \cite{mingers1989empirical,mingers1989empiricalb,murphy1993exploring,esposito1997comparative,quinlan1998miniboosting,murthy1998automatic,opitz1999popular,speiser2019comparison}. Current wisdom from the literature dictates that for any problem at hand, one needs a domain expert to try out, compare and tune various methods to build the best decision tree model for any given problem domain. For instance, the popular Python library Scikit-learn \cite{pedregosa2011scikit} implements both Gini impurity and entropy as candidate {\it splitting criteria} (a crucial component in building the decision trees top-down by deciding which node to split into child nodes), and yet theory suggests another promising candidate \cite{kearns1996boosting} {that achieves  smaller error bounds under the Weak Hypothesis Assumption\footnote{an {\it a priori} assumption on the target function. Roughly speaking, it means that the decision tree node functions are already slightly correlated with the target function.}. It is therefore desirable to determine which approach works better for the data coming from a given domain. With sufficient data, can we automate this tedious manual process? 

In this work we approach this crucial question, and propose ways to build more effective decision trees automatically. Our results show provable learning theoretic guarantees and select methods over larger search spaces than what human experts would typically explore. For example, instead of comparing a small finite number of splitting criteria, we examine learnability over continuously infinite parameterized families that yield more effective decision tree learning algorithms.\looseness-1

We consider the problem where the learner has access to multiple related datasets $D_1, \dots, D_N$ coming from the same problem domain (given by a fixed but unknown distribution $\cD$), and the goal is to design a decision tree (or ensemble) learning algorithm that works well over the distribution $\cD$ using as few datasets ($N$, the sample complexity) as possible. This algorithm design problem is typically formulated as the selection of a hyperparameter from an infinite family. Typically finding the best hyperparameters even on a single problem sample is tedious and computationally intensive, so we would like to bound the number of samples over which we should optimize them, while learning  parameters that generalize well over the distribution generating the problem samples.
We take steps towards systematically unifying, automating and formalizing the process of designing decision tree learning algorithms, in a way that is adaptive to the data domain.\looseness-1

\subsection{Our contributions}

We formulate the problem of designing a decision tree learning algorithm as a hyperparameter selection problem over multiple problem instances coming from the same domain. Under this formulation, we study the sample complexity, i.e.\ the number of problem instances needed to learn a provably good algorithm (hyperparameter) under the statistical learning setting (meaning problem instances are drawn from a fixed but unknown distribution) from several different design perspectives important in the construction of decision trees. A key technical challenge is the non-linearity of boundaries of the piecewise structured dual loss function.\looseness-1

\begin{itemize}[leftmargin=*,topsep=0pt,partopsep=1ex,parsep=1ex]\itemsep=-4pt
    \item We introduce a novel family of node splitting criterion called $(\alpha,\beta)$-Tsallis entropy criterion, which contains two tunable parameters, and includes several popular node splitting criteria from the literature including the entropy-based ID3/C4.5 \cite{quinlan1986induction,quinlan2014c4} and Gini impurity based CART \cite{breiman1984classification}. We  bound the sample complexity of provably tuning these hyperparameters in top-down learning algorithms. We also show how to estimate the parameters $(\alpha,\beta)$ in a computationally efficient manner, by proposing an output-sensitive implementation of the ERM.
    
    \item We further study tuning of parameters in Bayesian decision tree learning algorithms used in generating the prior distribution.  
    We also study a parameterized family for node splitting for regression trees and bound the sample complexity of tuning the parameter.
    \item We next consider the problem of learning the pruning algorithm used in constructing the decision tree. We show how to tune parameters in popular algorithms including the complexity parameter $\tilde{\alpha}$ in the Minimal Cost-Complexity Pruning algorithm, and again obtain sample complexity bounds. We also study the sample complexity of tuning pessimistic error pruning methods, which are computationally faster.
    \item We extend our results to tuning parameters in tree ensembles. For random forests, we bound the sample complexity for selecting the splitting criterion from the $(\alpha,\beta)$-Tsallis family. For gradient-boosted ensembles using XGBoost, we show how to tune the regularization strength.
    \item We consider the problem of optimizing the explainability-accuracy trade-off in the design of decision tree learning algorithms. Here we consider  tuning splitting and pruning parameters simultaneously  when growing a decision tree to size $t$ and pruning it down to size $t'\le t$, while minimizing an objective that incorporates explainability as well as accuracy. Our work is the first to study explainability from a data-driven design perspective.
    \item We further show a how to simultaneously select a parameterized algorithm family and its hyperparameter that corresponds to the best combination, given a collection of tunable algorithms for the same problem.
    \item We perform experiments to show the practical significance and effectiveness of tuning these hyperparameters on real world datasets.
\end{itemize}

\subsection{Related work}

Decision trees \cite{breiman1984classification} predate the development of deep learning based methods, but continue to be an extremely popular tool for data analysis and learning simple explainable models. Recent interest in developing interpretable ``white-box'' models due to concerns around deployment of deep learning in sensitive and critical decision-making have led to a renewed interest in the study of decision trees \cite{rudin2019stop,loyola2019black,molnar2020interpretable}. However, the basic  suite of tools for the design of decision trees has seen little advancement over the decades.  

{\it Building and pruning decision trees}. Typically, decision trees are built in two stages. First the tree is grown in a top-down fashion by successively ``splitting'' existing nodes according to some {\it splitting criterion}. Numerous different methods to select which node to split and how to split have been proposed in the literature \cite{breiman1984classification,quinlan1986induction,quinlan2014c4,kearns1996boosting,larose2014discovering}. The second stage involves pruning the tree to avoid overfitting the training set, and again a variety of approaches are known \cite{breiman1984classification,bohanec1994trading,mingers1987expert,quinlan1987simplifying,mansour1997pessimistic}. Furthermore, empirical work suggests that the appropriate method to use, for both splitting and pruning, depends on the data domain at hand \cite{mingers1989empirical,mingers1989empiricalb}.
The task of selecting the best method or tuning the hyperparameters for a method is left to domain expert. Recent work has developed techniques for computing the optimal decision trees by using branch-and-bound and dynamic programming based techniques \cite{hu2019optimal,lin2020generalized,demirovic2022murtree}. The key idea is to reduce the search space by tracking bounds on the objective value. However, these approaches are computationally more expensive than the classical greedy methods.

{\it Tsallis entropy}. Often in modern applications one needs to solve the classification problem over repeated data instances from the same problem domain. In this work, we take steps to automate the process of algorithm selection for decision tree learning using repeated access to data from the same domain, and also develop more powerful methods for designing decision trees. Our approach is based on Tsallis entropy introduced in the context of statistical physics \cite{tsallis1988possible}, which has been found to be variously useful in machine learning, for example, as a regularizer in reinforcement learning \cite{chow2018path,zimmert2021tsallis}. \cite{khodak2024meta} study tuning of Tsallis entropy in an online meta learning setting for adversarial bandit algorithms. Tsallis entropy based splitting criteria have been empirically studied in the context of decision trees \cite{wang2016improving}. We provide a novel two-parameter generalization that unifies various previously proposed metrics, and provide principled guarantees on the sample complexity of learning the parameters from data. 

{\it Tree ensembles} learn a collection of decision trees and combine their predictions to improve accuracy and reduce overfitting. Two distinct classes of ensembles are popular: bagging-based approaches like random forests, and boosting based methods including gradient-boosted decision trees (GBDTs). {\it Random forests} \cite{breiman2001random} involve training a number of decision trees on different (possibly overlapping) subsets of the training dataset, possibly using different feature subsets. The predictions of the different decision trees in the ensemble are aggregated, for example using a plurality vote in classification, or averaging the predictions in regression. Gradient-boosted trees also build a collection of trees, but by approximately minimizing a regularized loss using gradient-based steps when building the trees. Remarkably, properly tuned gradient-boosted trees outperform neural networks on tabular data, especially for larger datasets \cite{chen2016xgboost,prokhorenkova2018catboost,mcelfresh2023neural}.

{\it Data-driven algorithm design} is a framework  for the design  of algorithms using machine learning in order to optimize performance over  problems coming from a common problem domain \cite{gupta2016pac,balcan2020data}. The approach has been successful  in designing more effective algorithms for a variety of combinatorial problems, ranging from those encountered in machine learning to those in data science (e.g. learning to cluster), numerical linear algebra and game theory \cite{balcan2018learning,balcan2018data,balcan2020learning,balcan2022improved,balcan2023analysis,balcan2024subsidy,balcan2024algorithm,bartlett2022generalization}. The basic premise is to design algorithms for typical inputs instead of worst-case inputs by examining repeated problem instances. In machine learning, this can be used to provably tune hyperparameters \cite{balcan2021data,balcan2022provably,blum2021learning,balcan2025sample} as opposed to employing heuristics like grid search or random search \cite{bergstra2012random} for which formal global-optimality guarantees are typically not known. Moreover, recent work establishes that any data-independent (even non-uniform) discretization performs poorly on certain data distributions, necessitating the need for finding the best hyperparameters over the continuous domain \cite{balcan2024learning}. A key idea is to treat the hyperparameter tuning problem as a statistical learning problem with the parameter space as the hypothesis class and repeated problem samples as data points. Bounding the statistical complexity of this hypothesis class implies sample complexity bounds for hyperparameter tuning using classic learning theory. Another related line of work is the development of output-sensitive computationally efficient algorithms for implementing the Empirical Risk Minimization (ERM) for tuning the hyperparameters \cite{balcan2020learning,balcan2024accelerating}. We limit our attention to the statistical learning setting in this work, an interesting future direction is to extend our results to  online and meta-learning \cite{balcan2018dispersion,sharma2020learning,balcan2021learning,sharma2024no,sharma2025offline}.

{\it Data-driven tree search}. {General techniques have been developed for providing the sample complexity of tuning a linear combination of variable selection policies in branch-and-bound, and special cases of ``path-wise'' node selection policies have been studied.  In contrast, our work provides new technical insights for node selection policies relevant for decision tree learning which do not satisfy the previously studied path-wise properties and involve a more challenging non-linear interpolation. Prior work \cite{balcan2021sample} obtains a general result for tree search without any path-wise assumptions, but  still require a linear interpolation of selection policies.\looseness-1}

\section{Preliminaries and definitions}
\label{sec:prelim}

Let $[k]$ denote the set of integers $\{1,2,\dots,k\}$. A (supervised) classification problem is given by a labeled dataset $D=(X,y)$ over some input domain $X\in\cX^n$ and $y\in\cY^n=[c]^n$ where $c$ denotes the number of distinct classes or categories. Let $\cD$ be a distribution over classification problems of size $n$.\footnote{For simplicity of technical presentation we assume that the dataset size $n$ is fixed across problem instances, but  our sample complexity results hold even without this assumption.} We will consider parameterized families of decision tree learning algorithms, parameterized by some parameter $\rho\in\cP\subseteq \R^d$ and access to datasets $D_1,\dots,D_N\sim \cD^N$. We do not assume that individual data points $(X_i,y_i)$ are i.i.d.\ in any dataset $D_j$. 

We consider a finite {\it node function class} $\cF$ consisting of boolean functions $\cX\rightarrow\{0,1\}$ which are used to label internal nodes in the decision tree, i.e.\ govern given any data point $x\in\cX$ whether the left or right branch should be taken when classifying $x$ using the decision tree. Any given data point $x\in\cX$ corresponds to a unique leaf node determined by the node function evaluations at $x$ along some unique root-to-leaf path. Each leaf node of the decision tree is labeled by a class in $[c]$. Given a dataset $(X,y)$ this leaf label is typically set as the most common label for data points $x\in X$ which are mapped to the leaf node.  

We denote by $T_{l\rightarrow f}$ the tree obtained by {\it splitting} the leaf node $l$, which corresponds to replacing it by an internal node labeled by $f$ and creating two child leaf nodes. We consider a parameterized class of splitting criterion $\cG_{\cP}$ over some parameter space $\cP$ consisting of functions $g_\rho:[0,1]^{c}\rightarrow\R_{\ge0}$ for $\rho\in\cP$. The splitting criterion governs which leaf to be split next and which node function $f\in\cF$ to be used when building the decision tree using a top-down learning algorithm which builds a decision tree by successively splitting nodes using $g_\rho$ until the size equals  input tree size $t$. More precisely, suppose $w(l)$ (the {\it weight} of leaf $l$) denotes the number of data points in $X$ that map to leaf $l$, and 
suppose $p_i(l)$ denotes the fraction of data points labeled by $y=i\in[c]$ among those points that map to leaf $l$. The splitting function over  tree $T$ is given by 

$$G_\rho(T)=\sum_{l\in \mathrm{leaves}(T)}w(l)g_\rho\left(\{p_i(l)\}_{i=1}^c
\right),$$

\noindent and we build the decision tree by successively splitting the leaf nodes using node function $f$ which cause the maximum decrease in the splitting function. For example, the information gain criterion may be expressed using $g_\rho(\{p_i(l)\}_{i=1}^c)=-\sum_{i=1}^cp_i\log p_i$.

\begin{algorithm}[t]
\caption{Top-down decision tree learner ($\cF$, $g_\rho$, t)\label{alg:td-learning}}
\flushleft\textbf{Input}: Dataset $D=(X,y)$\\
\textbf{Parameters}: Node function class $\cF$, splitting criterion $g_\rho\in \cG_{\cP}$, tree size $t$\\
\textbf{Output}: Decision tree $T$\\
\begin{algorithmic}[1] 
\STATE Initialize $T$ to a leaf node labeled by most frequent label $y$ in $D$.
\WHILE{$T$ has at most $t$ internal nodes}
\STATE $l^*,f^*\gets \mathrm{argmin}_{l\in \mathrm{leaves}(T),f\in\cF}G_\rho(T_{l\rightarrow f})$
\STATE $T\gets T_{l^*\rightarrow f^*}$
\ENDWHILE
\STATE \textbf{return} $T$
\end{algorithmic}
\end{algorithm}

Algorithm \ref{alg:td-learning} summarizes this well-known general paradigm. We denote the tree obtained by the top-down decision tree learner on dataset $D$ as $T_{\cF,\rho,t}(D)$.
We study the 0-1 loss of the resulting decision tree classifier. If $T(x)\in[c]$ denotes the prediction of tree $T$ on $x\in \cX$, we define the loss on dataset $D(X,y)$ as

$$L(T,D):=\frac{1}{n}\sum_{i=1}^n\I[T(X_i)\ne y_i],$$

\noindent where $\I[\cdot]$ denotes the 0-1 valued indicator function.

\subsection{Learning theory background and standard results from data-driven algorithm design}
The pseudo-dimension is frequently used to analyze the learning theoretic complexity of real-valued  function classes, and will be a main tool in our sample complexity analysis. For completeness, we include below the formal definition.

\begin{definition}[Shattering and Pseudo-dimension, \cite{anthony1999neural}]\label{def:pdim}
Let $\cF$ be a set of functions mapping from $\cX$ to $\R$, and suppose that $S = \{x_1, \dots, x_m\} \subseteq \cX$. Then $S$ is pseudo-shattered by $\cF$ if there are real numbers $r_1, \dots, r_m$ such that for each $b \in \{0, 1\}^m$ there is a function $f_b$ in $\cF$ with $\mathrm{sign}(f_b(x_i) - r_i) = b_i$ for $i \in [m]$. We say that $r = (r_1, \dots, r_m)$ witnesses the shattering. We say that $\cF$ has pseudo-dimension $d$ if $d$ is the maximum cardinality of a subset $S$ of $\cX$ that is pseudo-shattered by $\cF$, denoted $\mathrm{Pdim}(\cF) = d$. If no such maximum exists, we say that $\cF$ has infinite pseudo-dimension. 
\end{definition}

\noindent Pseudo-dimension is a real-valued analogue of VC-dimension, and is a classic complexity notion in learning theory due to the following theorem which implies the uniform convergence sample complexity for any function in class $\cF$ when $\mathrm{Pdim}(\cF)$ is finite. 
\begin{theorem}[Uniform convergence sample complexity via pseudo-dimension, \cite{anthony1999neural}]\label{thm:pdim}
    Suppose $\cF$ is a class of real-valued functions with range in $[0, H]$ and finite $\mathrm{Pdim}(\cF)$. For every $\epsilon > 0$ and $\delta \in (0, 1)$, given any distribution $\cD$ over $\cX$, with probability $1-\delta$ over the draw of a sample $S\sim\cD^M$, for all functions $f\in\cF$, we have $|\frac{1}{M}\sum_{x\in S}f(x)-\bbE_{x\sim \cD}[f(x)]|\le \epsilon$ for some
    $M=O\left(\left(\frac{H}{\epsilon}\right)^2\left(\mathrm{Pdim}(\cF) + \log\frac{1}{\delta} \right)\right)$. 
\end{theorem}
\noindent 
We also need the following lemma from data-driven algorithm design, which bounds the pseudo-dimension of the class of loss functions, when the dual losses (i.e.\ losses as a function of some algorithmic hyperparameter computed on any fixed problem instance) have a piecewise constant structure with a bounded number of pieces.

\begin{lemma} (Lemma 2.3, \cite{balcan2020data})
     Suppose that for every problem instance $D \in \cD$, the function $L_D(\rho) : \R\rightarrow\R$ is piecewise
constant with at most $N$ pieces. Then the family $\{L_\rho(\cdot)\}$ over instances in $\cD$ has pseudo-dimension $O(\log N)$. \label{lem:ddad-1}
\end{lemma}

\section{Learning to  split nodes in top-down algorithms}\label{sec:split}

In this section, we study the sample complexity of learning the splitting criterion by tuning the parameters in novel as well as previously proposed top-down learning approaches. 

\subsection{$(\alpha,\beta)$-Tsallis entropy}
Given a discrete probability distribution $P=\{p_i\}$ with $\sum_{i=1}^cp_i=1$, we define $(\alpha,\beta)$-Tsallis entropy  as

$$g^{\textsc{Tsallis}}_{\alpha,\beta}(P):=\frac{C}{\alpha-1}\left(1-\left(\sum_{i=1}^cp_i^{\alpha}\right)^{\beta}\right),$$

\noindent where $C$ is a normalizing constant (does not affect Algorithm \ref{alg:td-learning}), $\alpha\in\R^+,\beta\in\mathbb{Z}^+$. $\beta=1$ corresponds to standard Tsallis entropy \cite{tsallis1988possible}.  For example, $\alpha=2,\beta=1$ corresponds to Gini impurity, $\alpha=\frac{1}{2},\beta=2$ corresponds to the Kearns and Mansour criterion (using which error $\epsilon$ can be achieved with trees of size $\mathrm{poly}(1/\epsilon)$, \cite{kearns1996boosting}) and $\lim_{\alpha\rightarrow1}g^{\textsc{Tsallis}}_{\alpha,1}(P)$ yields the (Shannon) entropy criterion. We omit the definitions of these well-known criteria (see Appendix \ref{app:split}, proof of Proposition \ref{prop:interpolation}).  Formally, we show in the following proposition that $(\alpha,\beta)$-Tsallis entropy recovers three popular splitting criteria for appropriate values of $\alpha,\beta$.

\begin{proposition}\label{prop:interpolation}
    The splitting criteria $g^{\textsc{Tsallis}}_{2,1}(P), g^{\textsc{Tsallis}}_{\frac{1}{2},2}(P)$ and $\lim_{\alpha\rightarrow1}g^{\textsc{Tsallis}}_{\alpha,1}(P)$ correspond to Gini impurity, the \cite{kearns1996boosting} objective and the entropy criterion respectively.
\end{proposition}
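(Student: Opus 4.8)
The plan is to establish all three correspondences by direct substitution into the definition
\[
g^{\textsc{Tsallis}}_{\alpha,\beta}(P)=\frac{C}{\alpha-1}\left(1-\left(\sum_{i=1}^c p_i^{\alpha}\right)^{\beta}\right),
\]
treating the normalizing constant $C>0$ as a free scalar throughout, since (as already noted) rescaling $g_\rho$ by a positive constant leaves the $\mathrm{argmin}$ in Algorithm~\ref{alg:td-learning} unchanged. Two of the three cases are then immediate algebraic evaluations, and the third requires a short limit computation.

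First I would handle the Gini case by plugging in $\alpha=2,\beta=1$, which gives $g^{\textsc{Tsallis}}_{2,1}(P)=C\bigl(1-\sum_i p_i^2\bigr)$; taking $C=1$ recovers the Gini impurity $1-\sum_i p_i^2$ exactly. Next, for the Kearns--Mansour objective I would set $\alpha=\tfrac12,\beta=2$, obtaining $\frac{C}{-1/2}\bigl(1-(\sum_i\sqrt{p_i})^2\bigr)=2C\bigl((\sum_i\sqrt{p_i})^2-1\bigr)$. Specializing to the binary case $c=2$ with $p_1=p,\ p_2=1-p$ and expanding $(\sqrt{p}+\sqrt{1-p})^2=1+2\sqrt{p(1-p)}$, the bracket collapses and the expression becomes $4C\sqrt{p(1-p)}$, matching the Kearns--Mansour splitting function $2\sqrt{p(1-p)}$ upon choosing $C=\tfrac12$.

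The only genuinely non-routine step is the entropy case, $\lim_{\alpha\to1}g^{\textsc{Tsallis}}_{\alpha,1}(P)$. Because $\sum_i p_i=1$, the numerator $1-\sum_i p_i^{\alpha}$ and the denominator $\alpha-1$ both vanish as $\alpha\to1$, yielding a $0/0$ indeterminate form. I would resolve it either by L'H\^opital's rule in $\alpha$---differentiating the numerator gives $-\sum_i p_i^{\alpha}\ln p_i$ and the denominator gives $1$, so the limit is $-\sum_i p_i\ln p_i$---or equivalently by the first-order expansion $p_i^{\alpha}=p_i\,e^{(\alpha-1)\ln p_i}=p_i\bigl(1+(\alpha-1)\ln p_i+o(\alpha-1)\bigr)$. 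Either route produces the Shannon entropy $-C\sum_i p_i\ln p_i$, with the logarithm base absorbed into $C$.

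I expect the main obstacle to be purely a matter of bookkeeping rather than mathematical difficulty: one must fix the published normalization of the Kearns--Mansour criterion (stated for binary labels) and verify that the multiclass form $(\sum_i\sqrt{p_i})^2-1$ reduces to it, and one must justify interchanging the limit with the finite sum in the entropy case (trivial here, as $c$ is finite). No step involves the piecewise or non-linear structure of the dual loss that complicates the later sample-complexity arguments; this proposition is simply the sanity check that the two-parameter family subsumes the three classical criteria.
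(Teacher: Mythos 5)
Your proposal is correct and follows essentially the same route as the paper's proof: direct substitution for the Gini and Kearns--Mansour cases (with the same binary-case expansion $(\sqrt{p}+\sqrt{1-p})^2=1+2\sqrt{p(1-p)}$) and an L'H\^opital-style limit for the entropy case. The only cosmetic difference is that the paper keeps the normalizing constant $C$ symbolic rather than fixing it to match published normalizations, which is immaterial since $C$ does not affect Algorithm~\ref{alg:td-learning}.
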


\begin{proof}
     Setting $\alpha=2,\beta=1$ immediately yields the expression for Gini impurity. Plugging $\alpha=\frac{1}{2},\beta=2$ yields

\begin{align*}
    g^{\textsc{Tsallis}}_{\frac{1}{2},2}(P)&=\frac{C}{-\frac{1}{2}}\left(1-\left(\sum_{i=1}^c\sqrt{p_i}\right)^2\right)\\
    &=2C\left(\sum_{i=1}^cp_i+2\sum_{i\ne j}\sqrt{p_ip_j}-1\right)\\
    &=4C\sum_{i\ne j}\sqrt{p_ip_j}.
\end{align*}

For $c=2$, $g^{\textsc{Tsallis}}_{\frac{1}{2},2}(P)=4C\sqrt{p_1(1-p_1)}$ which matches the splitting function of \cite{kearns1996boosting}. Also taking the limit $\alpha\rightarrow1$ gives

\begin{align*}
    g^{\textsc{Tsallis}}_{\alpha\rightarrow 1,\beta}(P)&=\lim_{\alpha\rightarrow 1}\frac{C}{\alpha-1}\left(1-\left(\sum_{i=1}^c{p_i}^\alpha\right)^\beta\right)\\
    &=-C\beta\left(\sum_{i=1}^cp_i^\alpha\right)^{\beta-1}\left(\sum_{i=1}^cp_i^\alpha\ln p_i\right)\\
    &=-C\beta\left(\sum_{i=1}^cp_i\ln p_i\right).
\end{align*}

\noindent For $\beta=1$, this corresponds to the entropy criterion.
\end{proof}

\noindent We further show that the $g^{\textsc{Tsallis}}_{\alpha,\beta}(P)$ family of splitting criteria enjoys the property of being {\it permissible} splitting criteria (in the sense of \cite{kearns1996boosting}) for any $\alpha\in\R^+,\beta\in\mathbb{Z}^+,\alpha\notin (1/\beta,1)$, which implies useful desirable guarantees (e.g.\ ensuring convergences of top-down learning) for the top-down decision tree learner \cite{kearns1996boosting,de2015splitting}. 

\begin{proposition}\label{prop:permissible}
    $(\alpha,\beta)$-Tsallis entropy has the following properties for any $\alpha\in\R^+,\beta\in\mathbb{Z}^+,\alpha\notin (1/\beta,1)$
    \begin{enumerate}
        \item (Symmetry) For any $P=\{p_i\}$, $Q=\{p_{\pi(i)}$ for some permutation $\pi$ over $[c]\}$, $g^{\textsc{Tsallis}}_{\alpha,\beta}(Q)=g^{\textsc{Tsallis}}_{\alpha,\beta}(P)$.
        \item $g^{\textsc{Tsallis}}_{\alpha,\beta}(P)=0$ at any vertex $p_i=1,p_j=0$ for all $j\ne i$ of the probability simplex $P$.
        \item (Concavity) $g^{\textsc{Tsallis}}_{\alpha,\beta}(aP+(1-a)Q)\ge ag^{\textsc{Tsallis}}_{\alpha,\beta}(P)+(1-a)g^{\textsc{Tsallis}}_{\alpha,\beta}(Q)$ for any $a\in[0,1]$.
    \end{enumerate}
\end{proposition}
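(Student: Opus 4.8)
The plan is to verify the three listed properties in turn, the first two being essentially immediate and the third (concavity) carrying all the content. Throughout I will write $S(P):=\sum_{i=1}^c p_i^\alpha$, so that $g^{\textsc{Tsallis}}_{\alpha,\beta}(P)=\frac{C}{\alpha-1}\bigl(1-S(P)^\beta\bigr)$. For symmetry (property~1), note that $g^{\textsc{Tsallis}}_{\alpha,\beta}$ depends on $P$ only through the power sum $S(P)$, which is manifestly invariant under permuting the coordinates; hence so is $g^{\textsc{Tsallis}}_{\alpha,\beta}$. For the vertex values (property~2), at a vertex $p_i=1$, $p_j=0$ for $j\ne i$, we have $S(P)=1^\alpha+\sum_{j\ne i}0^\alpha=1$ because $\alpha>0$, so $S(P)^\beta=1$ and $g^{\textsc{Tsallis}}_{\alpha,\beta}(P)=\frac{C}{\alpha-1}(1-1)=0$.

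For concavity (property~3) I would set $h(P):=S(P)^\beta$ and reduce to showing that $h$ is \emph{convex} when $\alpha>1$ and \emph{concave} when $\alpha\le 1/\beta$: indeed, for $\alpha>1$ the prefactor $C/(\alpha-1)$ is positive, so $g$ is concave iff $1-h$ is concave iff $h$ is convex; for $\alpha<1$ the prefactor is negative, so $g$ is concave iff $1-h$ is convex iff $h$ is concave. The case $\alpha>1$ is easy: $t\mapsto t^\alpha$ is convex so $S$ is convex, and composing with the convex increasing map $s\mapsto s^\beta$ keeps $h$ convex. The endpoint $\alpha=1$ is degenerate in the formula but, by an L'H\^opital argument in $\alpha$, $\lim_{\alpha\to1}g^{\textsc{Tsallis}}_{\alpha,\beta}$ is a positive multiple of the Shannon entropy $-\sum_i p_i\ln p_i$, which is concave.

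The substantive case is $\alpha\le 1/\beta<1$ (with $\beta\ge2$), where now $S$ is concave but the outer power $s\mapsto s^\beta$ is convex, so concavity of the composition $h=S^\beta$ is not automatic and must be forced by the constraint on $\alpha$. Here I would compute the Hessian directly. Using $\partial_i S=\alpha p_i^{\alpha-1}$ and the diagonal $\partial^2_{ii}S=\alpha(\alpha-1)p_i^{\alpha-2}$, writing $\nabla^2 h=\beta(\beta-1)S^{\beta-2}\nabla S(\nabla S)^\top+\beta S^{\beta-1}\nabla^2 S$ and testing against an arbitrary direction $v$, negative semidefiniteness reduces (after dividing by the positive factor $\beta\alpha S^{\beta-2}$) to
$$(\beta-1)\alpha\Bigl(\sum_i v_i p_i^{\alpha-1}\Bigr)^2 \le (1-\alpha)\,S(P)\sum_i v_i^2 p_i^{\alpha-2}.$$
The key step is a Cauchy--Schwarz bound on the cross term: splitting $p_i^{\alpha-1}=p_i^{(\alpha-2)/2}\,p_i^{\alpha/2}$ gives $\bigl(\sum_i v_i p_i^{\alpha-1}\bigr)^2\le\bigl(\sum_i v_i^2 p_i^{\alpha-2}\bigr)S(P)$. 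Substituting this, the desired inequality holds as soon as $(\beta-1)\alpha\le 1-\alpha$, i.e.\ exactly $\alpha\beta\le1$, which is precisely the hypothesis $\alpha\le 1/\beta$. (The same computation with $\beta=1$ makes the left side vanish, recovering concavity of $S$ for all $\alpha\le1$, so the single-parameter regime is subsumed.)

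I expect the concave regime $\alpha<1$ to be the main obstacle, for two reasons: the prefactor flips sign at $\alpha=1$, and one must show that a concave base raised to an integer power remains concave, which is false in general and holds here only because the Cauchy--Schwarz step converts the Hessian condition into exactly the arithmetic constraint $\alpha\beta\le1$. A minor additional care is needed on the boundary of the simplex, where some $p_i=0$ makes the exponent $p_i^{\alpha-2}$ singular; I would establish concavity on the relative interior via the Hessian and then extend to the closed simplex by continuity, which is valid since $\alpha>0$ makes each $p_i^\alpha$ continuous up to the boundary.
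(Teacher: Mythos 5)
Your proof is correct, and properties 1 and 2 as well as the case $\alpha\ge 1$ match the paper's argument (the paper likewise composes the convex maps $x\mapsto x^\alpha$ and $u\mapsto u^\beta$). Where you genuinely diverge is the substantive regime $0<\alpha\le 1/\beta$: the paper proves concavity of $\bigl(\sum_i p_i^\alpha\bigr)^\beta$ by a global, first-order-free argument --- it applies the reverse Minkowski inequality for the exponent $\alpha<1$ to get $\bigl(\sum_i(ap_i+(1-a)q_i)^\alpha\bigr)^{1/\alpha}\ge a\bigl(\sum_i p_i^\alpha\bigr)^{1/\alpha}+(1-a)\bigl(\sum_i q_i^\alpha\bigr)^{1/\alpha}$, then raises to the power $\alpha\beta$ and uses concavity of $u\mapsto u^{\alpha\beta}$ for $\alpha\beta\le 1$ --- whereas you compute the Hessian of $S^\beta$ and close the quadratic-form inequality with the Cauchy--Schwarz split $p_i^{\alpha-1}=p_i^{(\alpha-2)/2}p_i^{\alpha/2}$. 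Both routes isolate $\alpha\beta\le 1$ as the exact arithmetic condition. The paper's approach buys a proof that works directly on the closed simplex (no differentiability or boundary-limit issues, since it is a pure midpoint-type inequality), at the cost of invoking reverse Minkowski as a black box; your approach is self-contained modulo Cauchy--Schwarz and makes transparent \emph{why} the threshold is $\alpha\beta=1$ (the Hessian condition degenerates to it exactly), but requires the extra step of restricting to the relative interior and extending by continuity, which you correctly flag and justify. Your L'H\^opital treatment of $\alpha=1$ is also a useful explicit note that the paper leaves implicit.
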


\noindent A proof is located in Appendix \ref{app:split}. The above properties ensure that $(\alpha,\beta)$-Tsallis entropy is a permissible splitting criterion whenever $\alpha\notin(1/\beta,1)$. {This property makes the $(\alpha,\beta)$-Tsallis entropy an interesting parametric family to study and select the best splitting criterion form, but is not needed for establishing our sample complexity results.}

\subsubsection{Sample complexity of tuning $\alpha,\beta$}

We consider $\alpha\in\R^+$ and $\beta\in[B]$ for some positive integer $B$, and observe that several previously studied splitting criteria can be readily obtained by setting appropriate values of parameters $\alpha,\beta$.
We consider the problem of tuning the parameters $\alpha,\beta$ simultaneously when designing the splitting criterion, given access to multiple problem instances (datasets) drawn from some distribution $\cD$. The goal is to find parameters $\hat{\alpha},\hat{\beta}$ based on the training samples, so that on a random $D\sim\cD$, the expected loss

$$\bbE_{D\sim\cD}L(T_{\cF,(\hat{\alpha},\hat{\beta}),t},D)$$

\noindent is minimized.
We will bound the sample complexity of the  Empirical Risk Minimization (ERM) principle, which given $N$ problem samples $D_1,\dots,D_N$ computes parameters $\hat{\alpha},\hat{\beta}$ such that

\begin{equation}\label{eqn:erm-tsallis}
\hat{\alpha},\hat{\beta}=\mathrm{argmin}_{\alpha>0,\beta\in[B]}\sum_{i=1}^NL(T_{\cF,({\alpha},{\beta}),t},D_i).
\end{equation}

\noindent We obtain the following guarantee on the sample complexity of learning a near-optimal splitting criterion. {The overall argument involves an induction on the size $t$ of the tree, which has appeared in several prior works \cite{megiddo1978combinatorial,balcan2018learning,balcan2021sample,balcan2022improved}, coupled with a counting argument for upper bounding the number of parameter sub-intervals corresponding to different behaviors of Algorithm \ref{alg:td-learning} given a parameter interval corresponding to a fixed partial tree corresponding to an intermediate stage of the algorithm. }

\begin{theorem}\label{thm:pdim-alpha-beta}
Consider the $(\alpha,\beta)$-Tsallis entropy family of splitting criteria. Suppose $\alpha>0$ and $\beta\in[B]$. For any $\epsilon,\delta>0$ and any distribution $\cD$ over problem instances with $n$ examples, $O(\frac{1}{\epsilon^2}(t(\log |\cF|+\log t+c\log(B+c))+\log\frac{1}{\delta}))$ samples  drawn from $\cD$ are sufficient to ensure that with probability at least $1-\delta$ over the draw of the samples, the parameters $\hat{\alpha},\hat{\beta}$ learned by ERM over the sample  have expected loss  that is at most $\epsilon$ larger than the expected loss of the best parameters $\alpha^*,\beta^*=\mathrm{argmin}_{\alpha>0,\beta\ge 1}\bbE_{D\sim\cD}[L(T_{\cF,(\hat{\alpha},\hat{\beta}),t},D)]$  over $\cD$. Here $t$ is the size of the decision tree, $\cF$ is the node function class used to label the nodes of the decision tree and $c$ is the number of label classes.
\end{theorem}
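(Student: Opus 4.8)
The plan is to bound the pseudo-dimension of the dual class of loss functions $\{(\alpha,\beta)\mapsto L(T_{\cF,(\alpha,\beta),t},D)\}$ indexed by problem instances $D$, and then invoke the standard uniform-convergence guarantee: for a class of $[0,1]$-valued functions with pseudo-dimension $d$, a sample of size $O(\frac{1}{\epsilon^2}(d+\log\frac1\delta))$ ensures ERM is $\epsilon$-suboptimal with probability $1-\delta$. It therefore suffices to establish $\mathrm{Pdim}=O(t(\log|\cF|+\log t+c\log(B+c)))$.

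First I would exploit that $L(T_{\cF,(\alpha,\beta),t},D)$ depends on $(\alpha,\beta)$ only through the combinatorial object $T_{\cF,(\alpha,\beta),t}(D)$, a decision tree taking finitely many values, so for each fixed $D$ the loss is a piecewise-constant function of $(\alpha,\beta)$. Since $\beta$ ranges over the discrete set $[B]$, I would fix $\beta$ and study the partition of the $\alpha$-axis $\R^+$ into maximal intervals on which the learned tree, and hence the loss, is constant; let $N(t)$ denote the maximum number of such intervals.

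The heart of the argument is a counting bound on $N(t)$ by induction on the number of splits. On an interval where the partial tree after $s$ splits is fixed, the next decision is $\mathrm{argmin}_{l,f}G_\rho(T_{l\to f})$ over at most $(s{+}1)|\cF|\le t|\cF|$ candidate splits, and the argmin can change only at values of $\alpha$ where two candidate scores coincide, i.e.\ at roots of $G_\rho(T_{l_1\to f_1})-G_\rho(T_{l_2\to f_2})=0$. The key observation is that after canceling the leaves common to both candidate trees, this difference is a signed combination of at most six leaf terms $w(l)g^{\textsc{Tsallis}}_{\alpha,\beta}(P_l)$; clearing the common factor $C/(\alpha-1)$ (and recording $\alpha=1$ as a single extra breakpoint), each factor $(\sum_i p_i(l)^\alpha)^\beta$ expands, since $\beta\in\mathbb{Z}^+$, into a multinomial sum of at most $\binom{\beta+c-1}{c-1}$ exponentials $e^{\alpha\sum_i k_i\ln p_i(l)}$. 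Thus the whole difference is an exponential sum in $\alpha$ with $O(\binom{\beta+c-1}{c-1})$ distinct terms, and such a sum has at most that many real zeros by a Rolle/generalized-Descartes argument. Hence each of the $O(t^2|\cF|^2)$ candidate pairs contributes $O(\binom{\beta+c-1}{c-1})$ breakpoints, so a fixed-tree interval splits into at most $O(t^2|\cF|^2\binom{\beta+c-1}{c-1})$ subintervals at the next step. The induction then yields $N(t)\le[O(t^2|\cF|^2\binom{\beta+c-1}{c-1})]^t$, so $\log N(t)=O(t(\log t+\log|\cF|+c\log(B+c)))$, using $\binom{\beta+c-1}{c-1}\le(B+c)^c$.

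Finally I would convert this structural bound into a pseudo-dimension bound. For any $m$ instances $D_1,\dots,D_m$ and any thresholds, overlaying their $\alpha$-partitions across the $B$ values of $\beta$ yields at most $BmN(t)$ regions of $(\alpha,\beta)$-space on which all $m$ losses are simultaneously constant, hence at most $BmN(t)$ realizable sign patterns. Shattering $m$ points requires $2^m\le BmN(t)$, giving $m=O(t(\log|\cF|+\log t+c\log(B+c)))$ after taking logarithms and absorbing $\log B=O(c\log(B+c))$. Plugging this pseudo-dimension into the uniform-convergence bound yields the claimed sample complexity. I expect the main obstacle to be the non-linear root-counting step: unlike prior data-driven tree-search analyses whose decision boundaries are \emph{linear} in the parameter, here the boundaries are transcendental exponential sums in $\alpha$, and the entire bound hinges on controlling their number of zeros via the multinomial expansion and the resulting $\binom{\beta+c-1}{c-1}$ term-count.
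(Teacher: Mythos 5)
Your proposal is correct and follows essentially the same route as the paper's proof: a per-round counting of candidate-pair comparisons, a multinomial expansion of $(\sum_i p_i^\alpha)^\beta$ into an exponential sum in $\alpha$ whose roots are bounded via Rolle's theorem, an induction over the $t$ rounds yielding $O(B|\cF|^{2t}t^{2t}(B+c)^{ct})$ pieces, and the standard piecewise-constant-to-pseudo-dimension conversion followed by uniform convergence. The only cosmetic differences (e.g.\ $\binom{\beta+c-1}{c-1}$ vs.\ $\binom{\beta+c-1}{c}$, and your explicit handling of the $\alpha=1$ factor) do not change the bound.
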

\begin{proof}
 Since the loss is completely determined by the final decision tree $T_{\cF,({\alpha},{\beta}),t}$, it suffices to bound the number of different algorithm behaviors as one varies the hyperparameters ${\alpha},{\beta}$ in Algorithm \ref{alg:td-learning}. As the tree is grown according to the top-down algorithm, suppose the number of internal nodes is $\tau<t$. There are $\tau+1$ candidate leaf nodes to split and $|\cF|$ candidate node functions, for a total of $(\tau+1)|\cF|$ choices for $(l,f)$. For any of ${(\tau+1)|\cF|\choose 2}$ pair of candidates $(l_1,f_1)$ and $(l_2,f_2)$, the preference for which candidate is `best' and selected for splitting next is governed by the splitting functions $G_{\alpha,\beta}(T_{l_1\rightarrow f_1})$ and $G_{\alpha,\beta}(T_{l_2\rightarrow f_2})$. 
    This preference flips across boundary condition given by $\sum_{l\in \mathrm{leaves}(T_{l_1\rightarrow f_1})}w(l)g_{\alpha,\beta}(\{p_i(l)\})=\sum_{l\in \mathrm{leaves}(T_{l_2\rightarrow f_2})}w(l)g_{\alpha,\beta}(\{p_i(l)\})$. Most terms (all but three) cancel out on both sides as we substitute a single leaf node by an internal node on both LHS and RHS. The only unbalanced terms correspond to deleted leaves $l_1,l_2$ and newly introduced leaves $l_1^a,l_1^b,l_2^a,l_2^b$, i.e.
    \begin{align*}
    \sum_{l\in \{l_1^a,l_1^b,l_2\}}\!\!\!w(l)g_{\alpha,\beta}(\{p_i(l)\})=\!\!\!\sum_{l\in \{l_2^a,l_2^b,l_1\}}\!\!\!w(l)g_{\alpha,\beta}(\{p_i(l)\}),
    \end{align*}
    where $g_{\alpha,\beta}(\cdot)=g^{\textsc{Tsallis}}_{\alpha,\beta}(\cdot)$, the $(\alpha,\beta)$-Tsallis entropy. {Note that here $w(l)$ is a constant for a fixed problem instance (independent of the parameters $\alpha,\beta$ given the structure of the tree).} For integer $\beta$, by the multinomial theorem, $(\sum_{i=1}^cp_i(l)^\alpha)^\beta$ consists of at most ${\beta+c-1\choose c}$ distinct terms. By Rolle's theorem (more preciely, Lemma \ref{lem:exp-roots}), the number of distinct solutions of the above equation in $\alpha$ is $O((\beta+c)^{c})$. Thus, for any fixed $\beta$ and fixed partial decision tree built in $\tau$ rounds, the number of critical points of $\alpha$ at which the $\mathrm{argmax}$ in Line 3 of Algorithm \ref{alg:td-learning} changes is at most $O(|\cF|^2\tau^2(\beta+c)^{c})$ and a fixed leaf node is split and labeled by a fixed $f$ for any interval of $\alpha$ induced by these critical points. {Using a simple inductive argument over the number of rounds $t$ of Algorithm \ref{alg:td-learning},} this corresponds to at most $O(\Pi_{\tau=1}^t|\cF|^2\tau^2(\beta+c)^{c})$ critical points across which the algorithmic behaviour (sequence of choices of node splits in Algorithm \ref{alg:td-learning}) can change as $\alpha$ is varied for a fixed $\beta$. Adding up over $\beta\in[B]$, we get $O(\sum_{\beta=1}^B|\cF|^{2t}t^{2t}(\beta+c)^{ct})$, or at most $O(B|\cF|^{2t}t^{2t}(B+c)^{ct})$ critical points.

    This implies a bound of $O(t(\log |\cF|+\log t+c\log(B+c)))$ on the pseudodimension of the loss function class by using Lemma \ref{lem:ddad-1}. Finally, an application of Theorem \ref{thm:pdim} completes the proof.
    \end{proof}

\noindent Observe that parameter $\alpha$ is tuned over a continuous domain and our near-optimality guarantees hold over the entire continuous domain (as opposed to say over a finite grid of $\alpha$ values). Our results have implications for cross-validation since typical cross-validation can be modeled via a distribution $\cD$ created by sampling splits from the same fixed dataset, in which case our results imply how many splits are sufficient to converge to within $\epsilon$ error of best the parameter learned by the cross validation procedure. Similar convergence guarantees have been shown for tuning the regularization coefficients of the elastic net algorithm for linear regression via cross-validation \cite{balcan2022provably,balcan2023new}. Our setting is of course more general than just cross validation and includes the case where the different datasets come from related similar tasks for which we seek to learn a common good choice of hyperparameters.

\subsubsection{Efficient implementation in output-sensitive time} 

We have discussed so far the sample complexity of the ERM algorithm for several popularly used families of tree models. However given a sample $D_1,\dots,D_N$, how does one actually efficiently implement the ERM? In this section, we propose computationally efficient approaches for computing the best hyperparameter on a given sample, for which the above sample complexity results apply.

\paragraph{Learning to split in output-sensitive time.} Our goal is to find best hyperparameters on a data sample, i.e. $\hat{\alpha},\hat{\beta}=\mathrm{argmin}_{\alpha>0,\beta\in[B]}\sum_{i=1}^NL(T_{\cF,({\alpha},{\beta}),t},D_i).$ For the 0-1 classification loss $L$, our proof of Theorem \ref{thm:pdim-alpha-beta} establishes that $L$ is piecewise constant as a function of $\alpha$, for any fixed value of $\beta$. However the upper bounds on the number of pieces in this piecewise constant function are worst-case exponential in $t$, the size of the tree. A naive approach to implement the ERM would be to compare all pairs of node functions and candidate nodes for splitting, and compute all the potential critical points for each comparison by finding all the $O((\beta+c)^{c})$ roots of an exponential equation in $\alpha$. However, only a small number (possibly zero) of these critical points may actually be relevant depending on the structure of the current tree (which may be different for different values of $\alpha$) at any stage $\tau<t$ (the number of current internal nodes). Therefore by keeping track of where we are in the top-down tree construction, the computational cost can be reduced to scale with the actual critical points instead of the potential candidates. This leads to an {\it output-sensitive} runnting time, and the approach has previous been used to improve the running times of several problems including clustering, pricing and sequence alignment \cite{balcan2020learning,balcan2024accelerating}. The key difference is that prior approaches were only developed for piecewise-structured losses with piece boundaries given by linear equations, while here we extend the approach to boundaries involving exponential equations.

\begin{algorithm}[tb]
\caption{Output-senstive implementation of ERM in (\ref{eqn:erm-tsallis})}
\label{alg:execution-tree}
\flushleft    \textbf{Input}: Datasets $D_1,\dots,D_N$\\
\textbf{Parameters}: Node function class $\cF$, tree size $t$\\
\textbf{Output}: $\hat{\alpha},\hat{\beta}$
\begin{algorithmic}[1] 
\STATE Let $D\gets D_1$.
\STATE Let $r$ be the root node of the execution tree with $r.T = \{\}$ (current tree, initially empty) and $r.I = (0,\infty)$. 
\STATE Let $s$ be a stack of execution tree nodes, initially containing the root $r$.
\STATE Let $\mathcal{T} = \emptyset$ be the initially empty set of possible decision trees.
\STATE Let $\mathcal{I} = \emptyset$ be the initially empty set of intervals.
\WHILE{the stack s is not empty}{
\STATE Pop execution tree node $e$ off stack $s$.
\STATE If $e.T$ has a decision tree of size $t$, add $e.T$ to $\mathcal{T}$ and $e.I$ to $\mathcal{I}$.
\STATE Otherwise, compute the roots of the equation $$\sum_{l\in \mathrm{leaves}(e.T_{l_1\rightarrow f_1})}w(l)g_{\alpha,\beta}(\{p_i(l)\})=\sum_{l\in \mathrm{leaves}(e.T_{l_2\rightarrow f_2})}w(l)g_{\alpha,\beta}(\{p_i(l)\})$$
among all candidate leaf node pairs $(l_1,l_2)$ and pairs of node functions $(f_1, f_2)$ that lie within $e.I$ 
(e.g. using Newton's method). The roots are computed in a sequential order, considering only boundary conditions comparing with the current best candidate for $(l, f)$.
\FORALL{subintervals $(\alpha_i,\alpha_{i+1})$ of $e.I$ induced by these roots\label{step:loop}}{
\STATE Let $T'$ be the decision tree after performing the best split for that interval.
 \STATE Create node $e'$ with $e'.T=T'$ and $e'.I=(\alpha_i,\alpha_{i+1})$, and put it onto the stack $s$.
}
\ENDFOR
}
\ENDWHILE
\STATE Repeat the above for each value of $\beta$ and each dataset $D_i$.
\STATE Return $\hat{\alpha},\hat{\beta}$ by computing the loss between each successive $\alpha$-critical point for each $\beta$.\label{step:loss}
\end{algorithmic}
\end{algorithm}   

\begin{theorem}
Consider the $(\alpha,\beta)$-Tsallis entropy family of splitting criteria. The ERM in Equation (\ref{eqn:erm-tsallis}) can be implemented in time $O(R|\cF|^2t^2\mathcal{T}_e+RM)$ using Algorithm \ref{alg:execution-tree}, where $R$ is the actual number of critical points in $\sum_{i=1}^NL(\cdot,D_i)$, $|\cF|$ is the size of the node function class, $t$ is the size of tree, $M$ is the time to compute the loss of a fixed tree on a fixed dataset, and $\mathcal{T}_e$ is the time to compute a root of an exponential equation (e.g. using Newton's method). 
\end{theorem}
\begin{proof}
The correctness of Algorithm \ref{alg:execution-tree} largely follows by adapting the arguments in prior works \cite{balcan2020learning,balcan2024accelerating}, due to certain structural similarities in the algorithm families. We summarize here the key ideas, and how they are adapted to our setting. Note that to find the best $(\hat{\alpha},\hat{\beta})$, it is sufficient to compute the piecewise constant loss as a function of $\alpha$ for each fixed $\beta\in[B]$, and then find the best $(\hat{\alpha},\hat{\beta})$ over the $B$ piecewise constant functions. For a fixed dataset $D$ and any size $\tau<t$, the sequence of first $\tau$ splits performed by the top-down learning algorithm is a piecewise constant function of the parameter $\alpha$ (for any fixed $\beta$). This is more refined structure than shown in Theorem \ref{thm:pdim-alpha-beta}, where only the final loss after $t$ splits is shown to be piecewise constant. Initially $\tau = 0$, the partition of $\alpha$ is a single region containing all real values $(0,\infty)$. For each round $\tau > 0$, the piecewise constant partition for the first $\tau$ splits is a refinement of the partition for $\tau-1$ splits (to split $\tau$ times, we must have split $\tau-1$ in some fashion). We can represent this sequence of refinements using an {\it execution tree}, where each node in the tree is labeled by an interval $I\subset(0,\infty)$, the nodes at depth $\tau$ describe the partition of the parameter domain after $\tau$ splits, and the edges represent subset relationships. This tree represents all possible execution paths for all values of $\alpha$ when run on the instance $D$. In particular, each path from the root node to a leaf corresponds to one possible sequence of splits. To find the piecewise constant loss function for a dataset $D$, it is sufficient to enumerate the leaves of the execution tree and compute the corresponding losses. Algorithm \ref{alg:execution-tree} essentially achieves this enumeration by doing a depth-first traversal of the execution tree.

We will now bound the asymptotic running time. Step \ref{step:loop} of Algorithm \ref{alg:execution-tree} runs once for each possible decision tree, giving a total of $R$ iterations. Each iteration finds the optimal split by comparing $O(|\cF|^2t^2)$ candidates with time $\mathcal{T}_e$ each to compute a single interval in the solution. Finally, once the $R$  intervals are computed it takes time $O(M)$ per interval to perform step \ref{step:loss}.
\end{proof}

\noindent We anticipate two major future directions related to computationally efficient approaches for learning decision trees. Note that our sample complexity bounds involve uniform convergence arguments, and therefore imply small generalization error even for approximate ERMs. Therefore the design of approximate implementations of ERM which are computationally faster is an exciting direction for future theoretical and empirical research (e.g., see Sharma and Jones \cite{sharma2023efficiently} where this idea has been used in the context of graph-based semi-supervised learning). Also note that our sample and computational complexity results above hold for arbitrary distributions $\cD$. Another interesting  question is where one can show distribution-dependent computational efficiency for typical distributions. One approach to this can be to give a better expected (or high probability) bound on the output-size in our results above for some nice distributions. Finally, we leave extensions of the above approach beyond learning the splitting criterion in top-down learning to future work.

\subsection{$\gamma$-geometric family}
While $(\alpha,\beta)$-Tsallis entropy is well-motivated as a parameterized class of node splitting criteria as it includes several previously studied splitting criteria, and generalizes the Tsallis entropy which may be of independent interest in other applications, it involves simulatenous optimization of two parameters which can be computationally challenging. To this end, we define the following single parameter family which  interpolates known node splitting methods:

$$g_\gamma(\{p_i\}):=C\left(\Pi_ip_i\right)^\gamma,$$

\noindent where $\gamma\in(0,1]$ and $C$ is some constant. For binary classification, the setting $\gamma=\frac{1}{2}$ and $\gamma=1$ correspond to \cite{kearns1996boosting} and Gini impurity respectively, and $\gamma=1/c$ corresponds to the geometric mean of the probabilities $\{p_i\}$, for appropriate choice of $C$. It is straightforward to verify that $g_\gamma$ is permissible for all $\gamma\in(0,1]$, i.e.\ is symmetric, zero at simplical vertices and concave. We show the following improved sample complexity guarantee for tuning $\gamma$ (proof in Appendix \ref{app:split}). Note that this family is not a special case of $(\alpha,\beta)$-Tsallis entropy, but contains additional splitting functions which may work well on given domain-specific data. Also, since it has a single parameter,  it can be  easier to optimize efficiently in practice.

\begin{theorem}\label{thm:pdim-gamma}
Consider the $\gamma$-geometric family of splitting criteria. Suppose $\gamma\in(0,1]$. For any $\epsilon,\delta>0$ and any distribution $\cD$ over problem instances with $n$ examples, $O(\frac{1}{\epsilon^2}(t(\log |\cF|+\log t)+\log\frac{1}{\delta}))$ samples  drawn from $\cD$ are sufficient to ensure that with probability at least $1-\delta$ over the draw of the samples, the parameter $\hat{\gamma}$ learned by ERM over the sample is $\epsilon$-optimal, i.e.\ has expected loss  at most $\epsilon$ larger than that of the optimal parameter  over $\cD$.
\end{theorem}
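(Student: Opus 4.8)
The plan is to follow the same dual-function strategy used for Theorem~\ref{thm:pdim-alpha-beta}: fix an arbitrary problem instance $D=(X,y)$ and study the dual loss $\gamma\mapsto L(T_{\cF,\gamma,t},D)$ as $\gamma$ ranges over $(0,1]$. Since the $0$-$1$ loss is completely determined by the final tree $T_{\cF,\gamma,t}(D)$ produced by Algorithm~\ref{alg:td-learning}, and the tree changes only when the sequence of $\argmin$ selections in Line~3 changes, this dual loss is piecewise constant in $\gamma$. The whole argument reduces to bounding the number of pieces: a bound of $M$ pieces yields pseudodimension $O(\log M)$ by a standard argument for one-parameter piecewise-constant dual classes (e.g.\ Lemma~3.8 of \cite{balcan2021much}), and the sample complexity $O(\frac{1}{\epsilon^2}(\log M+\log\frac{1}{\delta}))$ then follows from classic uniform-convergence results for $[0,1]$-bounded losses \cite{anthony1999neural,balcan2020data}.

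The step where the single-parameter family buys an improvement is the per-comparison boundary analysis. For a fixed partial tree $T$ with $\tau$ leaves, the selection between two candidates $(l_1,f_1)$ and $(l_2,f_2)$ flips exactly when $G_\gamma(T_{l_1\rightarrow f_1})=G_\gamma(T_{l_2\rightarrow f_2})$. Writing $a_l:=\prod_{i=1}^c p_i(l)$, the definition $g_\gamma(\{p_i(l)\})=C\,a_l^{\gamma}$ makes $G_\gamma(T)=C\sum_{l}w(l)\,a_l^{\gamma}$ a pure exponential sum in $\gamma$. The two trees $T_{l_1\rightarrow f_1}$ and $T_{l_2\rightarrow f_2}$ share all leaves of $T$ except $l_1,l_2$, so after cancelling the common terms the boundary condition becomes $\sum_{j} c_j\,a_j^{\gamma}=0$ with at most six signed terms (the two new children of each split, together with the other split leaf). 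Terms with $a_j=0$ (any leaf missing some class) vanish identically for $\gamma>0$ and can be dropped, and terms with coinciding $a_j$ can be merged; what remains is a Dirichlet polynomial $\sum_j c_j e^{\gamma\ln a_j}$ in $\gamma$ with distinct exponents. By the generalized Descartes rule (equivalently, repeated application of Rolle's Theorem) a nonzero such sum of $m$ exponentials has at most $m-1$ real roots, so each pairwise comparison contributes only $O(1)$ critical values of $\gamma$, \emph{independent of the number of classes $c$}. This is exactly where the $c\log(B+c)$ factor of Theorem~\ref{thm:pdim-alpha-beta} disappears.

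It then remains to aggregate these critical points across the top-down construction by induction on the number of splits, as in Theorem~\ref{thm:pdim-alpha-beta}. Within any single interval of $\gamma$ on which the first $\tau$ splits are fixed, there are $(\tau+1)|\cF|$ candidate pairs $(l,f)$, hence $\binom{(\tau+1)|\cF|}{2}=O(\tau^2|\cF|^2)$ comparisons, each adding $O(1)$ breakpoints; so the number of intervals $M_\tau$ satisfies $M_{\tau+1}\le O(\tau^2|\cF|^2)\,M_\tau$. Unrolling over $t$ rounds gives $M_t=O\big(|\cF|^{2t}(t!)^2\big)$, whence $\log M_t=O\big(t(\log|\cF|+\log t)\big)$, and the claimed sample complexity follows from the conversion described above. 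The main technical obstacle is the root-counting in the second paragraph: one must verify that the boundary equation really collapses to an exponential sum with $O(1)$ terms after cancellation — in particular that leaves with a vanishing class product $a_l=0$ (the common case) contribute nothing — so that the per-comparison bound is genuinely a constant rather than scaling with $c$ or with the tree size.
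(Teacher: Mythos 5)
Your proposal is correct and follows essentially the same route as the paper's proof: a per-comparison boundary analysis reducing to an exponential sum in $\gamma$ with $O(1)$ terms, Rolle's theorem to bound its roots, an induction over the $t$ rounds yielding $O(|\cF|^{2t}t^{2t})$ pieces, and the standard piecewise-constant-to-pseudodimension conversion. Your extra care about leaves with $\prod_i p_i(l)=0$ and coinciding exponents is a valid refinement of a point the paper glosses over, but it does not change the argument.
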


\begin{proof}
The loss is completely determined by the final decision tree $T_{\cF,\gamma,t}$. It suffices to bound the number of different algorithm behaviors as one varies the hyperparameter $\gamma$ in Algorithm \ref{alg:td-learning}. As the tree is grown according to the top-down algorithm, suppose the number of internal nodes is $\tau<t$. There are $\tau+1$ candidate leaf nodes to split and $|\cF|$ candidate node functions, for a total of $(\tau+1)|\cF|$ choices for $(l,f)$. For any of ${(\tau+1)|\cF|\choose 2}$ pair of candidates $(l_1,f_1)$ and $(l_2,f_2)$, the preference for which candidate is `best' and selected for splitting next is governed by the splitting functions $G_{\gamma}(T_{l_1\rightarrow f_1})$ and $G_{\gamma}(T_{l_2\rightarrow f_2})$. 
    This preference flips across boundary condition given by $\sum_{l\in \mathrm{leaves}(T_{l_1\rightarrow f_1})}w(l)g_{\gamma}(\{p_i(l)\})=\sum_{l\in \mathrm{leaves}(T_{l_2\rightarrow f_2})}w(l)g_{\gamma}(\{p_i(l)\})$. Most terms (all but three) cancel out on both sides as we substitute a single leaf node by an internal node on both LHS and RHS. The only unbalanced terms correspond to deleted leaves $l_1,l_2$ and newly introduced leaves $l_1^a,l_1^b,l_2^a,l_2^b$, i.e.
    \begin{align*}
    \sum_{l\in \{l_1^a,l_1^b,l_2\}}\!\!\!w(l)g_{\gamma}(\{p_i(l)\})=\!\!\!\sum_{l\in \{l_2^a,l_2^b,l_1\}}\!\!\!w(l)g_{\gamma}(\{p_i(l)\}).
    \end{align*}
    Recall $g_\gamma(\{p_i\}):=C\left(\Pi_ip_i\right)^\gamma,$ which implies that the above equation has six (i.e.\ $O(1)$) terms. By Rolle's theorem, the number of distinct solutions of the above equation in $\gamma$ is $O(1)$. Thus, the number of critical points of $\gamma$ at which the $\mathrm{argmax}$ in Line 3 of Algorithm \ref{alg:td-learning} changes is at most $O(|\cF|^2\tau^2)$ and a fixed leaf node is split and labeled by a fixed $f$ for any interval of $\gamma$ induced by these critical points. Over $t$ rounds, this corresponds to at most $O(\Pi_{\tau=1}^t|\cF|^2\tau^2)=O(|\cF|^{2t}t^{2t})$ critical points across which the algorithmic behaviour (sequence of choices of node splits in Algorithm \ref{alg:td-learning}) can change as $\gamma$ is varied. 
This implies a bound of $O(t(\log |\cF|+\log t))$ on the pseudodimension of the loss function class using Lemma \ref{lem:ddad-1}. An application of Theorem \ref{thm:pdim} completes the proof.
\end{proof}

\noindent We remark that it is straightforward to adapt Algorithm \ref{alg:execution-tree} to achieve an output-sensitive computationally efficient implementation of the ERM in the above result.

\subsection{Bayesian decision tree models}

Several Bayesian approaches for building a decision tree have been proposed in the literature \cite{chipman1998bayesian,chipman2002bayesian,wu2007bayesian}. The key idea is to specify a prior which induces a posterior distribution and a stochastic search is performed using Metropolis-Hastings algorithms to explore the posterior and find an effective tree. We will summarize the overall approach below and consider the problem of tuning parameters in the prior, which control the accuracy and size of the tree. Unlike most of prior research on data-driven algorithm design which study deterministic algorithms, we will analyze the learnability of parameters in a randomized algorithm. One notable exception is the study of random initialization of centers in $k$-center clustering via parameterized Lloyd's families \cite{balcan2018data}.

\paragraph{$\sigma,\phi$-Bayesian  family.} Let $F=(f_1,\dots,f_t)$ denote the node functions at the nodes of the decision tree $T$. The prior $p(F,T)$ is  specified using the relationship

$$p(F,T)=p(F|T)p(T).$$

\noindent We start with a tree $T$ consisting of a single root node. For any node $\tau$ in $T$, it is split with probability $p_{\textsc{split}}(\tau)=\sigma(1+d_\tau)^{-\phi}$, and if split, the process is repeated for the left and right children. Here $d_{\tau}$ denotes the depth of node $\tau$, and $\sigma,\phi$ are hyperparameters. The size of generated tree is capped to some upper bound $t$. Intuitively, $\sigma$ controls the size of the tree and $\phi$ controls its depth. At each node, the node function is selected uniformly at random from $\cF$. This specifies the prior $p(T)$. The conjugate prior for the node functions $F=(f_1,\dots,f_t)$ is given by the standard Dirichlet distribution of dimension $c-1$ (recall $c$ is the number of label classes)  with parameter $a = (a_1,\dots, a_c), a_i > 0$. Under this prior, the label predictions are given by

$$p(y\mid X,T)=\left(\frac{\Gamma(\sum_ia_i)}{\Pi_i\Gamma(a_i)}\right)^t\prod_{j=1}^t\frac{\Pi_i\Gamma(n_{ji}+a_i)}{\Gamma(n_j+\sum_ia_i)},$$

\noindent where $n_{ji} = \sum_k \I(y_{jk} =i)$ counts the number of datapoints with label $i$ at node $j$, $n_j = \sum_in_{ji}$ and $i = 1, \dots , c$. 
$a$ is usually set as the vector $(1, \dots ,1)$ which corresponds to the uniform Dirichlet prior. Finally the stochastic search of the induced posterior is done using the  Metropolis-Hastings (MH) algorithm for simulating a Markov chain \cite{chipman1998bayesian}. {Starting from a single root node, the initial tree $T^0$ is grown according to the prior $p(T)$. Then to construct $T^{i+1}$ from $T^i$, a new tree $T^*$ is constructed by splitting a random node using a random node function, pruning a random node, reassigning a node function or swapping  the node functions of a parent and a child node. Then we set $T^{i+1}=T^*$ with probability $q(T^i,T^*)$ according to the posterior $p(y\mid X,T)$, or keep $T^{i+1}=T^i$ otherwise. The algorithm outputs the tree $T^{\omega}$ where $\omega$ is typically a fixed large number of iterations (say 10000) to ensure that the search space is explored sufficiently well.} 

\paragraph{Hyperparameter tuning.} We consider the problem of tuning of prior hyperparameters $\sigma,\phi$, to obtain the best expected performance of the algorithm. {To this end, we define $\z=(\z_1,\dots,\z_{t-1})\in[0,1]^{t-1}$ as the randomness used in generating the tree $T$ according to $p(T)$. Let $T_{\z,\sigma,\phi}$ denote the resulting initial tree. Let $\z'$ denote the remaining randomness used in the selecting the random node function and the stochastic search, resulting in the final tree $T(T_{\z,\sigma,\phi}, \z',\omega)$. Our goal is to learn the hyperparameters $\sigma,\phi$ which minimize the expected loss
$$\bbE_{\z,\z',\cD}L(T(T_{\z,\sigma,\phi}, \z',\omega),D),$$
where $\cD$ denotes the distribution according to which the data $D$ is sampled, and $L$ denotes the expected fraction of incorrect predictions by the learned Bayesian decision tree.}  {ERM over a sample $D_1,\dots,D_n\sim\cD^n$ finds the parameters $\hat{\sigma},\hat{\phi}$ which minimize the expected average loss $\frac{1}{n}\sum_{i=1}^n\bbE_{\z,\z'}L(T(T_{\z,\sigma,\phi}, \z',\omega),D_i)$ over the problem instances in the sample. It is not clear how to efficiently implement this procedure. However, we can bound its sample complexity and prove the following  guarantee for learning a near-optimal prior for the Bayesian decision tree.}

\begin{theorem}\label{thm:pdim-alpha-beta-1}
Consider the $\sigma,\phi$-Bayesian  family. Suppose $\sigma,\phi>0$. {Consider the problem of designing a Bayesian decision tree learning algorithm by selecting the parameters from the $\sigma,\phi$-Bayesian algorithm family.}  For any $\epsilon,\delta>0$ and any distribution $\cD$ over problem instances with $n$ examples, $O(\frac{1}{\epsilon^2}(\log t+\log\frac{1}{\delta}))$ samples  drawn from $\cD$ are sufficient to ensure that with probability at least $1-\delta$ over the draw of the samples, the parameters $\hat{\sigma},\hat{\phi}$ learned by ERM over the sample  have expected loss  that is at most $\epsilon$ larger than the expected loss of the best parameters. Here $t$ denotes an upper bound on the size of the decision tree. 
\end{theorem}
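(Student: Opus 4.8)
The plan is to reuse the dual-function/pseudodimension recipe behind Theorem~\ref{thm:pdim-alpha-beta}, but to exploit a structural feature specific to the Bayesian family that makes the dependence on $(\sigma,\phi)$ far simpler. The key observation is that $\sigma$ and $\phi$ enter the algorithm only through the prior $p(T)$, and in fact only through the Bernoulli split probabilities $p_{\textsc{split}}(\tau)=\sigma(1+d_\tau)^{-\phi}$ that generate the initial tree $T_{\z,\sigma,\phi}$. By the construction in the problem setup, once $T_{\z,\sigma,\phi}$ is fixed the remaining computation --- the random node-function assignments and the Metropolis--Hastings search, driven by $\z'$ and by the likelihood $p(y\mid X,T)$ --- does not depend on $\sigma,\phi$. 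Hence for any fixed instance $D$ and fixed internal randomness $(\z,\z')$, the map $(\sigma,\phi)\mapsto L(T(T_{\z,\sigma,\phi},\z',\omega),D)$ depends on the parameters only through the discrete initial tree $T_{\z,\sigma,\phi}$.

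First I would fix $\z=(\z_1,\dots,\z_{t-1})$ and bound the number of distinct initial trees produced as $(\sigma,\phi)$ ranges over $\R^+\times\R^+$. Each of the (at most $t-1$) split decisions is resolved by a single comparison of $\z_j$ against a threshold $\sigma(1+d_j)^{-\phi}$, where $d_j$ is the depth of the corresponding node; taking logarithms, the locus where this comparison changes sign is $\log\sigma-\phi\log(1+d_j)=\log\z_j$, which is a \emph{line} in the coordinates $(\log\sigma,\phi)$. Thus the decisions are governed by at most $t-1$ lines, and since $m$ lines partition the plane into $O(m^2)$ cells, the parameter space splits into $O(t^2)$ regions on each of which every comparison --- and therefore the whole tree $T_{\z,\sigma,\phi}$ --- is constant. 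Consequently the dual loss $(\sigma,\phi)\mapsto L(T(T_{\z,\sigma,\phi},\z',\omega),D)$ is piecewise constant with $O(t^2)$ pieces whose boundaries lie in a class of VC dimension $O(1)$.

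Given this piecewise-constant dual structure with $\mathrm{poly}(t)$ pieces cut out by linear boundaries, I would conclude a pseudodimension bound of $O(\log t)$ for the induced loss-function class using the standard bound for piecewise-structured dual functions (e.g.\ Lemma~3.8 of \cite{balcan2021much}), and then invoke the classical uniform-convergence guarantee to obtain the sample complexity $O(\frac{1}{\epsilon^2}(\log t+\log\frac{1}{\delta}))$. It is worth emphasizing why this bound is independent of $|\cF|$ and $c$: unlike the splitting criterion of Theorem~\ref{thm:pdim-alpha-beta}, the parameters $\sigma,\phi$ control only the \emph{shape} of the sampled tree and never the choice of node function or the leaf labels, so those combinatorial quantities do not enter the count of algorithmic behaviors.

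The main obstacle is that the objective we actually optimize is the expected loss $\bbE_{\z,\z'}[L]$, while the clean piecewise-constant structure holds only after conditioning on the internal randomness: averaging over $\z$ smears the region boundaries, since their intercepts $\log\z_j$ are themselves random, so $F(\sigma,\phi)=\bbE_{\z,\z'}[L]$ need not be piecewise constant in $(\sigma,\phi)$. I would resolve this by folding the randomness into the instance, i.e.\ applying the pseudodimension argument to the deterministic function $(\sigma,\phi)\mapsto L(T(T_{\z,\sigma,\phi},\z',\omega),D)$ over augmented samples $(D_i,\z_i,\z'_i)$, whose dual is exactly the piecewise-constant function counted above. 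Uniform convergence over the augmented draws then transfers to the expected-loss objective --- a single random draw yields an unbiased estimate of $\bbE_{\z,\z'}[L]$, and taking expectations cannot increase the relevant complexity --- which delivers the claimed $\epsilon$-optimality for the ERM parameters $\hat\sigma,\hat\phi$.
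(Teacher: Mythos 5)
Your proposal is correct and follows essentially the same route as the paper: fix the internal randomness by folding $(\z,\z')$ into the problem instance, observe that $(\sigma,\phi)$ affect the output only through the initial tree, count the $t-1$ split-decision boundaries to get $O(t^2)$ constant pieces of the dual loss, and conclude via a standard uniform-convergence bound. The only (immaterial) differences are presentational: you linearize the boundaries in $(\log\sigma,\phi)$ coordinates and pass through a pseudodimension bound for piecewise-structured duals, whereas the paper argues the boundary curves are monotone and pairwise intersect at most once and then bounds the empirical Rademacher complexity directly via Massart's lemma.
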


\begin{proof}
Fix the dataset  $D$ and fix the random coins $\mathbf{z}$ used to generate the initial tree $T_{\z,\sigma,\phi}$. 
We will use the piecewise loss structure to bound the  Rademacher complexity, which would imply uniform convergence guarantees by applying standard learning-theoretic results.

{First, we establish a piecewise structure of the dual class loss for fixed prior randomization $\z'$, $\ell_{\z}^D(\sigma,\phi)=\bbE_{\z'}[L(T(T_{\z,\sigma,\phi}, \z',\omega),D)]$. Notice that the expected value under the remaining randomization $\z'$ is fixed, once the generated tree $T_{\z,\sigma,\phi}$ is fixed. We first give a bound on the number of pieces of distinct trees generated as $\sigma,\phi$ are varied. The decision whether a node $\tau_i$ is split is governed by whether $p_{\textsc{split}}(\tau)=\sigma(1+d_{\tau_i})^{-\phi}>\z_i$.} Thus, we get at most $t-1$ 2D curves in $\sigma,\phi$ across which the splitting decision may change. The curves are clearly monotonic. We further show that any pair of curves intersect in at most one point. Indeed, if $\sigma(1+d_{\tau_i})^{-\phi}=\z_i$ and $\sigma(1+d_{\tau_j})^{-\phi}=\z_j$, then $\phi'=\log(\z_j/\z_i)/\log\left(\frac{1+d_{\tau_i}}{1+d_{\tau_j}}\right)$ and $\sigma'=\z_i(1+d_{\tau_i})^{\phi'}$ is the unique point provided $\phi'>0$. Thus the set of all curves intersects in at most ${t-1\choose 2}<t^2$ points. Since the curves are planar, the number of pieces in the dual loss function (or the number of distinct trees) is also $O(t^2)$. The above argument easily extends to a collection of $N$ problem instances, with a total of at most $O(t^2N^2)$ pieces where distinct trees are generated across the instances.

Let $\rho_1, \dots, \rho_m$ denote a collection of parameter values, with one parameter from each of the $m=O(N^2t^2)$ pieces induced by all the dual class functions $\ell^{D_i}_{\z_i}(\cdot)$ for $i\in[N]$, i.e.\ across problems in the sample $\{D_1,\dots, D_N\}$ for some fixed randomizations. Let $\cH=\{f_{\rho}:(D,\mathbf{z})\mapsto l^{D}_{\mathbf{z}}(\rho)\mid \rho\in\R^+\times\R^+\}$ be a family of functions on a given sample of instances $S=\{D_i,\mathbf{z}_i\}_{i=1}^N$. Since the function $f_\rho$ is constant on each of the $m$ pieces, we have the empirical Rademacher complexity,

\begin{align*}
    \hat{R}(\cH,S):&=\frac{1}{N}\bbE_\sigma\left[\sup_{f_\rho\in\cH}\sum_{i=1}^N\sigma_i f_\rho(D_i,\mathbf{z}_i)\right]\\
    &=\frac{1}{N}\bbE_\sigma\left[\sup_{j\in[m]}\sum_{i=1}^N\sigma_i f_{\rho_j}(D_i,\mathbf{z}_i)\right]\\
    &=\frac{1}{N}\bbE_\sigma\left[\sup_{j\in[m]}\sum_{i=1}^N\sigma_i v_{ij}\right],
\end{align*}
where $\sigma=(\sigma_1,\dots,\sigma_m)$ is a tuple of i.i.d.\ Rademacher random variables, and $v_{ij}:=f_{\rho_j}(D_i,\mathbf{z}_i)$. Note that $v^{(j)}:=(v_{1j},\dots, v_{Nj})\in[0,H]^N$, and therefore $\|v^{(j)}\|_2\le H\sqrt{N}$, for all $j\in[m]$. An application of Massart's lemma \cite{massart2000some} gives 

\begin{align*}
    \hat{R}(\cH,S)&=\frac{1}{N}\bbE_\sigma\left[\sup_{j\in[m]}\sum_{i=1}^N\sigma_i v_{ij}\right] \\&\le H\sqrt{\frac{2\log m}{N}}\\&\le H\sqrt{\frac{4\log Nt}{N}}.
\end{align*}

\noindent Standard Rademacher complexity bounds (e.g.\ see \cite{mohri2018foundations}) now imply the desired sample complexity bound.

\end{proof}

\noindent So far, we have considered learning decision tree classifiers that classify any given data point into one of finitely many label classes. In the next subsection, we consider an extension of the setting to learning over regression data, for which decision trees are again known as useful interpretable models \cite{breiman1984classification}. 

\subsection{Splitting regression trees}\label{sec:regression}

In the regression problem, we have $\cY=\R$ and the top-down learning algorithm can still be used but with continous splitting criteria.
Popular splitting criteria for regression trees include the mean squared error (MSE) and half Poisson deviance (HPD). Let $y_l$ denote the set of labels for data points classified by leaf node $l$ in tree $T$ $\overline{y_l}:=\frac{1}{|y_l|}\sum_{y\in y_l}y$ is the mean prediction for node $l$. MSE is defined as $g_{\mathrm{MSE}}(y_l):=\frac{1}{|y_l|}\sum_{y\in y_l}(y-\overline{y_l})^2$ and HPD as $g_{\mathrm{HPD}}(y_l):=\frac{1}{|y_l|}\sum_{y\in y_l}(y\log\frac{y}{\overline{y_l}}-y+\overline{y_l})$. These are interpolated by the mean Tweedie deviance \cite{zhou2022tweedie} error with power $p$ given by

$$g_p(y_l):=\frac{2}{|y_l|}\sum_{y\in y_l}\left(\frac{\max\{y,0\}^{2-p}}{(1-p)(2-p)}-\frac{y\overline{y_l}}{1-p}+\frac{\overline{y_l}^{2-p}}{2-p}\right),$$

\noindent where $p=0$ corresponds to MSE and the limit $p\rightarrow 1$ corresponds to HPD. We call this the {\it $p$-Tweedie splitting criterion}, and have the following sample complexity guarantee for tuning $p$ in the multiple instance setting.

\begin{theorem}\label{thm:pdim-p}
Consider the $p$-Tweedie splitting criterion for regression trees. Suppose $p\in[0,1]$. For any $\epsilon,\delta>0$ and any distribution $\cD$ over problem instances with $n$ examples, $O(\frac{1}{\epsilon^2}(t(\log |\cF|+n)+\log\frac{1}{\delta}))$ samples  drawn from $\cD$ are sufficient to ensure that with probability at least $1-\delta$ over the draw of the samples, the Tweedie power parameter  $\hat{p}$ learned by ERM over the sample is $\epsilon$-optimal. {$\cF$ here the {\it node function class}, assumed to be finite (Section \ref{sec:prelim})}.
\end{theorem}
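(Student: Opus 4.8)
The plan is to follow the same strategy as in the proof of Theorem~\ref{thm:pdim-alpha-beta}: I would fix a problem instance $D=(X,y)$, study the dual loss function $p\mapsto L(T_{\cF,p,t},D)$, show it is piecewise constant in $p$ with a bounded number of pieces, and then invoke the standard pseudodimension-to-sample-complexity machinery (e.g.\ Lemma 3.8 of \cite{balcan2021much} together with \cite{anthony1999neural,balcan2020data}). The loss is piecewise constant because, once the assignment of data points to leaves is fixed, every leaf prediction $\overline{y_l}$ and hence the regression loss on $D$ is determined; the loss can therefore only change at parameter values where Algorithm~\ref{alg:td-learning} makes a different sequence of greedy splits. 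So it suffices to bound the number of values of $p\in[0,1]$ at which the greedy choice in Line~3 flips, over the $t$ rounds of the algorithm.

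The key technical step is to count, for a fixed intermediate tree $T$ with $\tau$ internal nodes, the number of critical $p$ at which two candidate splits $(l_1,f_1)$ and $(l_2,f_2)$ swap preference. This boundary is the solution set of $G_p(T_{l_1\to f_1})=G_p(T_{l_2\to f_2})$. Writing out $G_p=\sum_l |y_l| g_p(y_l)$ and multiplying through by the common factor $(1-p)(2-p)$, the defining equation becomes a finite combination of terms of the form $\max\{y,0\}^{2-p}$ and $\overline{y_l}^{2-p}$ together with terms that are linear in $p$. Substituting $q=2-p$ and writing $a^{2-p}=e^{q\ln a}$, each such equation is a \emph{generalized Dirichlet polynomial} (an exponential sum) $\sum_i P_i(q)e^{q\mu_i}$ in $q$, where the distinct rates $\mu_i=\ln a$ range over the distinct positive label values $\max\{y,0\}$ and leaf means $\overline{y_l}$, and the coefficients $P_i$ are polynomials of degree at most one. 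Since there are at most $n$ distinct label values and at most $\tau+1$ leaf means, this exponential sum has $O(n)$ terms, and by the generalized Rolle/Descartes bound for exponential sums it has $O(n)$ real zeros. Hence each candidate pair contributes $O(n)$ critical values of $p$.

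It remains to combine these counts over the execution of the algorithm by induction on $t$, exactly as in Theorem~\ref{thm:pdim-alpha-beta}. I would partition $[0,1]$ into subintervals on each of which the first $\tau$ greedy splits (and hence $T$) are constant; on such a subinterval there are $\binom{(\tau+1)|\cF|}{2}=O(\tau^2|\cF|^2)$ candidate pairs, each introducing $O(n)$ new breakpoints, so the number of subintervals is multiplied by at most $O(\tau^2|\cF|^2 n)$ in passing to round $\tau+1$. Taking the product over $\tau=1,\dots,t$ bounds the number of pieces of the dual loss by $|\cF|^{O(t)}(t!)^{2}\,n^{t}=\exp\big(O(t(\log|\cF|+n))\big)$, which yields a pseudodimension bound of $O(t(\log|\cF|+n))$ for the loss class and, through the standard uniform-convergence results, the stated sample complexity.

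I expect the root-counting of the boundary equation to be the main obstacle, and the reason this theorem does not reduce to Theorem~\ref{thm:pdim-alpha-beta}: because the Tweedie exponent $2-p$ varies continuously, the boundary is genuinely transcendental and the multinomial-plus-Rolle argument used for integer $\beta$ no longer applies, so one must instead invoke the zero-counting theory for exponential sums. Care is also needed at the removable singularities $p=1$ (the HPD limit, where $1-p$ vanishes in the denominator) and $p=2$, and in handling the domain/sign subtleties introduced by $\max\{y,0\}$ and by any non-positive leaf means, so that the terms $a^{2-p}$ are well defined throughout $[0,1]$.
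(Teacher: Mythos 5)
Your proposal is correct and follows essentially the same route as the paper's proof: both reduce the problem to counting the critical values of $p$ at which the greedy split preference flips, bound the roots of each boundary equation by $O(n)$ via a Rolle-type zero count for exponential sums, multiply over the $\binom{(\tau+1)|\cF|}{2}$ candidate pairs and the $t$ rounds, and conclude via the piecewise-constant-to-pseudodimension lemma and standard uniform convergence. If anything, your treatment of the root-counting step (the substitution $q=2-p$, the polynomial coefficients from clearing the $(1-p)(2-p)$ denominators, and the removable singularity at $p=1$) is more explicit than the paper's, which simply invokes ``Rolle's theorem'' for an equation with $O(n)$ terms.
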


\begin{proof}
The loss is completely determined by the final decision tree $T_{\cF,p,t}$. It suffices to bound the number of different algorithm behaviors as one varies the hyperparameter $p$ in Algorithm \ref{alg:td-learning}. As the tree is grown according to the top-down algorithm, suppose the number of internal nodes is $\tau<t$.  For any of ${(\tau+1)|\cF|\choose 2}$ pair of candidates $(l_1,f_1)$ and $(l_2,f_2)$, the preference for which candidate is ``best'' and selected for splitting next is governed by the splitting functions $G_{p}(T_{l_1\rightarrow f_1})$ and $G_{p}(T_{l_2\rightarrow f_2})$. 
    This preference flips across boundary condition given by $\sum_{l\in \mathrm{leaves}(T_{l_1\rightarrow f_1})}w(l)g_{p}(\{p_i(l)\})=\sum_{l\in \mathrm{leaves}(T_{l_2\rightarrow f_2})}w(l)g_{p}(\{p_i(l)\})$.  The expression simplifies and the only remaining terms correspond to deleted leaves $l_1,l_2$ and newly introduced leaves $l_1^a,l_1^b,l_2^a,l_2^b$, i.e.
    $\sum_{l\in \{l_1^a,l_1^b,l_2\}}\!\!w(l)g_{p}(\{p_i(l)\})=\!\!\sum_{l\in \{l_2^a,l_2^b,l_1\}}\!\!w(l)g_{p}(\{p_i(l)\}).$
    
    Recall $g_p(\{p_i\})$ gives an equation in $O(|y_l|)=O(n)$ terms. By Rolle's theorem, the number of distinct solutions of the above equation in $p$ is $O(n)$. Thus, the number of critical points of $p$ at which the $\mathrm{argmax}$ in Line 3 of Algorithm \ref{alg:td-learning} changes is at most $O(|\cF|^2\tau^2n)$ and a fixed leaf node is split and labeled by a fixed $f$ for any interval of $p$ induced by these critical points. Over $t$ rounds, this corresponds to at most $O(\Pi_{\tau=1}^t|\cF|^2\tau^2n)=O(|\cF|^{2t}t^{2t}n^t)$ critical points across which the algorithmic behaviour (sequence of choices of node splits in Algorithm \ref{alg:td-learning}) can change as $p$ is varied. 
This implies a bound of $O(t(\log |\cF|+\log t+n))=O((\log |\cF|+n))$ on the pseudodimension of the loss function class using Lemma \ref{lem:ddad-1}, since $t\le n$. An application of Theorem \ref{thm:pdim} completes the proof.
\end{proof}

\noindent Since $t<n$, this indicates that tuning regression parameters typically (for sufficiently small $B,c$) has a larger sample complexity upper bound.

\section{Learning to prune}\label{sec:prune}

\paragraph{Min cost-complexity pruning family.} Some leaf nodes in a decision tree learned via the top-down learning algorithm may involve nodes that overfit to a small number of data points. This overfitting problem  in decision tree learning is typically resolved  by  pruning some of the branches and reducing the tree size \cite{breiman1984classification}. The process of growing trees to size $t$ and pruning back to smaller size $t'$ tends to produce more effective decision trees than learning a tree of size $t'$ top-down. We study the mininum cost-complexity pruning algorithm here, which involves a tunable complexity parameter $\tilde{\alpha}$, and establish bounds on the sample complexity of tuning $\tilde{\alpha}$ given access to repeated problem instances from dataset distribution $\cD$.

\begin{figure}[t]
\centering
\includegraphics[width=0.5\columnwidth]{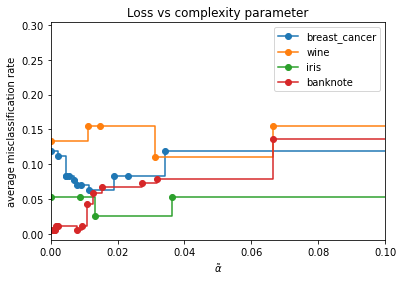}
\caption{The loss of pruned tree as a function of the mininum cost-complexity pruning parameter $\tilde{\alpha}$ is piecewise constant with at most $t$ pieces. The optimal  complexity parameter $\tilde{\alpha}$ varies with the dataset. 
}
\label{fig:pruning}
\end{figure}

The cost-complexity function for a tree $T$ is given by

$$R(T,D):=L(T,D)+\tilde{\alpha}|\mathrm{leaves}(T)|.$$

\noindent More leaf nodes correspond to higher flexibility of the decision tree in partitioning the space into smaller pieces and therefore greater ability to fit the training data. $\tilde{\alpha}\in[0,\infty)$ controls how strongly we penalize this increased complexity of the tree. The mininum cost-complexity pruning algorithm computes a subtree $T_{\tilde{\alpha}}$ of $T$ which minimizes the cost-complexity function. When $\tilde{\alpha}=0$, this selects $T$ and when $\tilde{\alpha}=\infty$ a single node tree is selected.

Given a leaf node $l$ of $T$ labeled by $i\in[c]$, the cost-complexity measure is defined to be $R(l,D)=\frac{w(l)-p_i(l)}{w(l)}+\tilde{\alpha}$. Denote by $T_t$, the branch of tree $T$ rooted at node $t$ and $R(T_t,D):=\sum_{l\in \mathrm{leaves}(T_t)}R(l,D)+\tilde{\alpha}|\mathrm{leaves}(T_t)|$. The mininum cost-complexity pruning algorithm successively deletes weakest links which minimize $\frac{R(t,D)-R(T_t,D)}{|\mathrm{leaves}(T_t)|-1}$ over internal nodes $t$ of the currently pruned tree.

We have the following result bounding the sample complexity of tuning $\tilde{\alpha}$ from multiple data samples. 

\begin{theorem}\label{thm:pdim-alpha-tilde}
Suppose $\tilde{\alpha}\in\R_{\ge0}$ and $t$ denote the size of the unpruned tree. For any $\epsilon,\delta>0$ and any distribution $\cD$ over problem instances with $n$ examples, $O(\frac{1}{\epsilon^2}(\log t+\log\frac{1}{\delta}))$ samples  drawn from $\cD$ are sufficient to ensure that with probability at least $1-\delta$ over the draw of the samples, the mininum cost-complexity pruning parameter learned by ERM over the sample is $\epsilon$-optimal.
\end{theorem}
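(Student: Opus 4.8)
The plan is to characterize the dual loss function $\tilde{\alpha}\mapsto L(T_{\tilde{\alpha}},D)$ for a fixed problem instance $D$, show it is piecewise constant with at most $t$ pieces, and then convert this structure into a pseudo-dimension bound and hence the claimed sample complexity via standard learning-theoretic machinery, exactly paralleling the strategy used for Theorem \ref{thm:pdim-alpha-beta}.

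First I would establish the nesting structure of minimum cost-complexity pruning. As $\tilde{\alpha}$ increases from $0$ to $\infty$, the weakest-link procedure produces a decreasing nested sequence of subtrees $T=T^{(0)}\supsetneq T^{(1)}\supsetneq\dots\supsetneq T^{(m)}$ terminating at the single-node tree. The pruned tree $T_{\tilde{\alpha}}$ minimizing $R(\cdot,D)$ changes only at the finitely many critical values of $\tilde{\alpha}$ at which a weakest link (the internal node minimizing $\frac{R(t,D)-R(T_t,D)}{|\mathrm{leaves}(T_t)|-1}$) becomes favorable to collapse. Since every pruning step strictly decreases the number of leaves by at least one, the sequence has length $m<t$, so $\R_{\ge0}$ is partitioned into at most $t$ intervals $[\tilde{\alpha}_k,\tilde{\alpha}_{k+1})$; on each interval the optimal subtree --- and therefore its prediction on every $x$ and its loss $L(T_{\tilde{\alpha}},D)$ --- is constant. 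This yields the piecewise-constant structure with at most $t$ pieces illustrated in Figure \ref{fig:pruning}. Since this is essentially the classical pruning analysis of \cite{breiman1984classification}, the work here is mainly in verifying the nesting property and the count of critical values.

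Second, I would translate the piecewise-constant dual structure into a pseudo-dimension bound. Given $N$ problem instances, each dual loss contributes at most $t-1$ breakpoints, so across all instances there are at most $N(t-1)$ breakpoints, which partition $\R_{\ge0}$ into at most $Nt$ cells on each of which the joint loss vector $(L(T_{\tilde{\alpha}},D_1),\dots,L(T_{\tilde{\alpha}},D_N))$ is constant. Hence at most $Nt$ distinct thresholded sign patterns can be realized, so pseudo-shattering $N$ instances forces $2^N\le Nt+1$, giving pseudo-dimension $O(\log t)$. Equivalently, this follows directly from the general piecewise-constant dual lemma (e.g.\ Lemma 3.8 of \cite{balcan2021much}) applied to a single one-dimensional parameter.

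Finally, invoking the classical relation between pseudo-dimension and uniform convergence for $[0,1]$-bounded loss \cite{anthony1999neural}, a pseudo-dimension of $O(\log t)$ yields the sample complexity $O(\frac{1}{\epsilon^2}(\log t+\log\frac{1}{\delta}))$ that suffices for the ERM-selected parameter to be $\epsilon$-optimal with probability at least $1-\delta$. I expect the main obstacle to be the first step --- rigorously arguing the nesting property and the bound of $t$ on the number of critical $\tilde{\alpha}$ values --- since once this piecewise-constant structure is in hand, the reduction to a sample complexity guarantee is routine.
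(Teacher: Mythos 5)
Your proposal is correct and follows essentially the same route as the paper: both rely on the classical nested-subtree structure of minimum cost-complexity pruning from \cite{breiman1984classification} to show the dual loss is piecewise constant with at most $t$ pieces, then apply the piecewise-constant dual lemma (Lemma \ref{lem:ddad-1}) to get pseudo-dimension $O(\log t)$ and conclude via Theorem \ref{thm:pdim}. The only difference is that you spell out the shattering count explicitly, which the paper delegates entirely to the cited lemma.
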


\begin{proof}
    Fix a dataset $D$. Then there are critical values of $\tilde{\alpha}$ given by $\tilde{\alpha}_0=0<\tilde{\alpha}_1<\tilde{\alpha}_2\dots<\infty$ such that the optimal pruned tree $T_k$ is fixed for over any interval $[\tilde{\alpha}_k,\tilde{\alpha}_{k+1})$ for $k\ge 0$. Furthermore, the optimal pruned trees form a sequence of nested sub-trees $T_0=T\supset T_1\supset \dots$ (\cite{breiman1984classification}, Chapter 10). Thus, the behavior of the min cost-complexity pruning algorithm is identical over at most $t$ intervals, and the loss function is piecewise constant with at most $t$ pieces. The rest of the argument is similar to the proof of Theorem \ref{thm:pdim-alpha-beta}, and we obtain a pseudo-dimension bound of $O(\log t)$ using Lemma \ref{lem:ddad-1}. An application of Theorem \ref{thm:pdim}   implies the stated sample complexity.
\end{proof}

\noindent Minimum cost-complexity pruning \cite{breiman1984classification} can be implemented using a simple dynamic program to find the sequence of trees that minimize $R(T,D)$ for any given fixed $\tilde{\alpha}$, which takes quadratic time to implement in the size of $T$ \cite{bohanec1994trading}. Faster pruning approaches are known that directly prune nodes for which the reduction in error or splitting criterion when splitting the node is not statistically significant. This includes Critical Value Pruning \cite{mingers1987expert,mingers1989empirical} and Pessimistic Error Pruning \cite{quinlan1987simplifying}. Principled statistical learning guarantees are known for the latter \cite{mansour1997pessimistic}, and here we will consider the problem of tuning the confidence parameter in pessimistic pruning, which we describe below.

\begin{figure}[t]
\centering
\includegraphics[width=0.5\columnwidth]{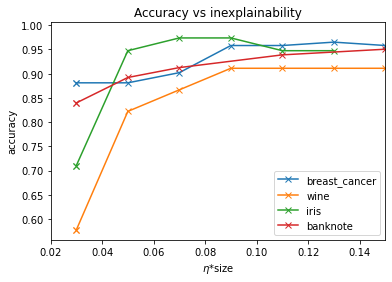} 
\caption{Accuracy vs $\eta*|\mathrm{leaves}(T)|$ as the pruning parameter $\tilde{\alpha}$ is varied, for $\eta=0.01$.}
\label{fig:exp}
\end{figure}

\paragraph{Pessimistic pruning family.} Suppose $\cX\subseteq\R^a$, i.e. each data point consists of $a$ real features or attributes. For any internal node $h$ of $T$, if $e_h$ denotes the fraction of data points that are misclassified among the $n_h$ data points that are classified via the sub-tree rooted at $h$, and $e_l$ denotes the fraction of misclassified data points if $h$ is replaced by a leaf node, then the pessimistic pruning test of  \cite{mansour1997pessimistic} is given by

$$e_l\le e_h +c_1\sqrt{\frac{t_h\log a + c_2}{n_h}},$$

\noindent where $c_1$ and $c_2$ are parameters, and $t_h$ denotes the size of the sub-tree rooted at $h$. We consider the problem of tuning $c_1,c_2$ given repeated data samples, and bound the sample complexity of tuning in the following theorem.

\begin{theorem}\label{thm:pdim-c}
Suppose $c_1,c_2\in\R_{\ge0}$ and $t$ denote the size of the unpruned tree. For any $\epsilon,\delta>0$ and any distribution $\cD$ over problem instances with $n$ examples, $O(\frac{1}{\epsilon^2}(\log t+\log\frac{1}{\delta}))$ samples  drawn from $\cD$ are sufficient to ensure that with probability at least $1-\delta$ over the draw of the samples, the pessimistic pruning parameters learned by ERM over the sample is $\epsilon$-optimal.
\end{theorem}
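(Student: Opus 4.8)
The plan is to reuse the data-driven design template from the rest of the paper: for a fixed problem instance I would analyze the \emph{dual} loss function $(c_1,c_2)\mapsto L(\hat T, D)$, where $\hat T$ is the tree produced by pessimistic pruning, show it is piecewise constant with polynomially many pieces, bound the pseudodimension of the resulting loss class, and then invoke the standard pseudodimension-to-sample-complexity conversion to obtain the claimed $O(\frac{1}{\epsilon^2}(\log t+\log\frac{1}{\delta}))$ bound.

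First I would fix the dataset $D$ and note that the unpruned tree $T$ of size $t$ is grown independently of $(c_1,c_2)$, hence is a fixed object for the purpose of tuning. For each of the at most $t$ internal nodes $h$, the quantities $e_h,e_l,t_h,n_h$ and $a$ are fully determined by $D$ and $T$, so the pruning test at $h$ becomes a fixed inequality in $(c_1,c_2)$. Writing $\Delta_h:=e_l-e_h$, the test holds vacuously when $\Delta_h\le 0$, and otherwise---squaring the two nonnegative sides---is equivalent to $c_1^2(t_h\log a+c_2)\le \Delta_h^2 n_h$. Thus each node with $\Delta_h>0$ contributes at most one boundary curve $\gamma_h:\ c_1^2(t_h\log a+c_2)=\Delta_h^2 n_h$ in the positive quadrant, across which the binary decision ``prune at $h$'' flips.

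Next I would observe that the tree $\hat T$ output by pessimistic pruning, and therefore the loss $L(\hat T,D)$, is a deterministic function of the length-$t$ vector of binary pruning decisions; the sequential (top-down or bottom-up) processing order only governs which of these decisions are actually consulted, not the curves $\gamma_h$ themselves. Consequently the dual loss is constant on each cell of the arrangement of the $\le t$ curves $\{\gamma_h\}$. To bound the number of cells I would show any two curves meet at most once in the positive quadrant: subtracting their equations gives $c_1^2(t_h-t_{h'})\log a=\Delta_h^2 n_h-\Delta_{h'}^2 n_{h'}$, which fixes at most one positive value of $c_1$ (and then $c_2$). An arrangement of $t$ curves with pairwise-single intersections has $O(t^2)$ cells, so the dual loss is piecewise constant with $O(t^2)$ pieces whose boundaries are zero sets of the degree-$3$ bivariate polynomials $c_1^2(t_h\log a+c_2)-\Delta_h^2 n_h$.

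Given this structure the remaining steps are routine. In the terminology of the piecewise-decomposable framework the piece functions are constants and the boundary functions are degree-$\le 3$ bivariate polynomial thresholds (so their dual VC dimension is $O(1)$), and at most $t$ boundary functions are used per instance; the general pseudodimension bound for such classes then yields $\mathrm{Pdim}=O(\log t)$, and the standard uniform-convergence result for bounded losses gives the stated sample complexity. I expect the main obstacle to be rigorously establishing the two structural claims above---that the recursive pruning output depends on $(c_1,c_2)$ only through the sign pattern of the fixed curves $\gamma_h$, and the pairwise-single-intersection property that keeps the piece count at $O(t^2)$; once these are in place, the pseudodimension and hence the sample complexity follow from results already cited in the paper.
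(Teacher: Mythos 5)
Your proposal is correct and follows the same structural analysis as the paper: fix the instance, note that the unpruned tree is independent of $(c_1,c_2)$, and observe that each of the at most $t$ pruning tests reduces (after squaring away the radical) to a degree-$3$ polynomial threshold in $(c_1,c_2)$, so the dual loss is determined by the sign pattern of at most $t$ fixed algebraic curves. The only divergence is the final conversion to a pseudodimension bound: the paper does not count cells of the arrangement at all, but instead notes that the loss is computed by a Goldberg--Jerrum-style algorithm with $n=2$ real parameters, degree $\Delta=3$ and predicate complexity $\Lambda=t$, and invokes Theorem~\ref{thm:gj} to get $\mathrm{Pdim}=O(n\log(\Delta\Lambda))=O(\log t)$ directly. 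Your route --- proving the pairwise-single-intersection property to bound the arrangement by $O(t^2)$ cells and then applying a piecewise-decomposable/Massart-type bound --- yields a more explicit picture of the dual loss at the cost of an extra intersection-counting lemma (which is correct as you state it: subtracting the two curve equations pins down $c_1^2$ when $t_h\neq t_{h'}$, and the curves are graphs over $c_1>0$, so $O(t^2)$ cells follows); the GJ route is shorter and sidesteps that lemma entirely. The caveat you flag --- that the recursive pruning output depends on $(c_1,c_2)$ only through $t$ \emph{fixed} curves, i.e.\ that $e_h,t_h$ at node $h$ are computed on the original rather than the already-pruned subtree --- is an implicit assumption the paper's proof makes as well, so you are on equal footing there. (Minor slip with no consequence for the argument: the squared test should read $\Delta_h^2 n_h \le c_1^2(t_h\log a + c_2)$, the reverse of the direction you wrote; the boundary curve is unchanged.)
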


\begin{proof}
    For a fixed dataset $D$, the $c_1,c_2$ parameter space can be partitioned by at most $t$ algebraic curves of degree 3 that determine the result of the pessimistic pruning test. We use a general result on the pseudodimension bound in data-driven algorithm design due to \cite{bartlett2022generalization} when the loss can be computed by evaluating rational expressions to obtain a $O(\log t)$ on the pseudodimension. The result is stated below for convenience.

In this theorem, our above arguments show that there is a GJ algorithm, i.e. an algorithm which only computes and compares rational (ratios of polynomials) functions of its inputs, for computing the loss function. Here the number of real parameters $n=2$, the maximum degree of any computed expression is $\Delta=3$ and the total number of distinct predicates that need to be evaluated to compute the loss for any value of the parameters is $\Lambda=t$. Plugging into Theorem \ref{thm:gj} yields a bound of $O(\log t)$ on the pseudo-dimension, and the result follows from Theorem \ref{thm:pdim}.
\end{proof}

\noindent We have studied parameter tuning in two distinct parameterized approaches for decision tree pruning. However, several other pruning methods are known in the literature \cite{esposito1997comparative,esposito1999effects}, and it is an interesting direction for future research to design approaches to select the best method based on data. We conclude this section with a remark about another interesting future direction, namely extending our results to tree ensembles.

\section{Extension to random forests}

A popular approach to reduce overfitting of decision trees to the training set is to use {\it random forests} \cite{breiman2001random}. This approach is the instantiation of a general technique called {\it bagging} \cite{breiman1996bagging} or bootstrap aggregating to decision tree learners. Intuitively, the key idea is to train a number of decision trees on different (possibly overlapping) subsets of the training dataset, and aggregate the predictions of the different decision trees to make the prediction on any test point.

Given a dataset $D=(X,y)$, and a parameter $n_t$ denoting the number of decision trees in the random forest, one constructs $n_t$ subsets 
$D^{(1)},\dots,D^{(n_t)}$ of the dataset $D$. For each dataset $D^{(j)}$, one learns a tree $T_j:=T_{\cF,\rho,t}(D^{(j)})$, where $\rho$ denotes the hyperparameter(s) used in splitting, pruning, etc.\ for each individual tree, and $t$ denotes the size of each tree. The prediction of the random forest is given by aggregating the predictions of individual trees $T_j$, e.g.\ averaging the predictions for regression, or taking the plurality vote for classification. We denote the resulting random forest as $F_{\cF,\rho,t}(D^{(1)},\dots,D^{(n_t)})$.

Evident from the above description of random forests, one still needs to select the splitting criterion for each decision tree which impacts the performance of the learned random forest. In practice, one typically sets the same splitting criterion for all the trees in the forest \cite{pedregosa2011scikit}. We extend our result on tuning the splitting criterion by learning $\alpha,\beta$ parameters in the $(\alpha,\beta)$-Tsallis entropy (Theorem \ref{thm:pdim-alpha-beta}) to random forests below. As before, we will bound the sample complexity of the  Empirical Risk Minimization (ERM) principle, which given $N$ problem samples $D_1,\dots,D_N$ computes parameters $\hat{\alpha},\hat{\beta}$ such that
$\hat{\alpha},\hat{\beta}=\mathrm{argmin}_{\alpha>0,\beta\in[B]}\sum_{i=1}^NL(F_{\cF,({\alpha},{\beta}),t},D_i).$ Here $$L(F_{\cF,({\alpha},{\beta}),t},D_i):=\bbE_{D_i^{(1)},\dots,D_i^{(n_t)}}[L(F_{\cF,({\alpha},{\beta}),t}(D_i^{(1)},\dots,D_i^{(n_t)}),D_i)]$$ denotes the expected loss of the random forest on dataset $D_i$, where the expectation is taken over the internal randomization involved in generating the data subsets  $D_i^{(1)},\dots,D_i^{(n_t)}$.

\begin{theorem}\label{thm:pdim-alpha-beta-forest}
Suppose $\alpha>0$, $\beta\in[B]$, $n_t=o(\sqrt{n})$. For any $\epsilon,\delta>0$ and any distribution $\cD$ over problem instances with $n$ examples, $O(\frac{1}{\epsilon^2}(n_tt\log n(\log |\cF|+\log t+c\log(B+c))+\log\frac{1}{\delta}))$ samples  drawn from $\cD$ are sufficient to ensure that with probability at least $1-\delta$ over the draw of the samples, the parameters $\hat{\alpha},\hat{\beta}$ learned by ERM over the sample  have expected loss  that is at most $\epsilon$ larger than the expected loss of the best parameters $\alpha^*,\beta^*=\mathrm{argmin}_{\alpha>0,\beta\ge 1}\bbE_{D\sim\cD}L(F_{\cF,(\hat{\alpha},\hat{\beta}),t},D)$  over $\cD$ and the randomization in selecting the data subsets $D^{(1)},\dots,D^{(n_t)}$ in the forest. Here $n_t$ is the number of decision trees in the random forest, $t$ is the size of each decision tree in the forest, $\cF$ is the node function class used to label the nodes of the decision tree and $c$ is the number of label classes.
\end{theorem}
\begin{proof}
    We will describe here the main argument needed to extend the proof of Theorem \ref{thm:pdim-alpha-beta} to the above result for random forests. Note that for any fixed dataset $D^{(j)}$, the number of distinct decision trees as $\alpha,\beta$ are varied is upper bounded in the proof Theorem \ref{thm:pdim-alpha-beta} as $O(B|\cF|^{2t}t^{2t}(B+c)^{ct})$. 
    
    To determine the number of distinct random forests, it is sufficient to bound the number of ways of putting $n$ distinct balls corresponding to the datapoints into $n_t$ identical bins (since datasets  $D^{(j)}$ are treated identically when aggregating the predictions). This is given by Stirling numbers of the second kind, and when $n_t=o(\sqrt{n})$, they are asymptotically ($n\rightarrow\infty$) equal to $\frac{n^{2n_t}}{2^{n_t}n_t!}$. Combining with above, the total number of random forests may be upper bounded by $O(n^{2n_t}B|\cF|^{2t}t^{2t}(B+c)^{ct})$.

    Using Lemma \ref{lem:ddad-1}, this implies a bound of $O(n_tt\log n(\log |\cF|+\log t+c\log(B+c))$ on the pseudodimension of the dual loss function class, and the claimed sample complexity guarantee follows from standard learning theoretic results \cite{anthony1999neural}.
\end{proof}

\noindent While Theorem \ref{thm:pdim-alpha-beta-forest} is established for classification, a similar result can be established for tuning the regression splitting parameter in Section \ref{sec:regression} by combining the argument above with the proof of Theorem \ref{thm:pdim-p}.

\begin{remark}
    Random forests often also employ another bagging strategy known as  ``feature bagging'' \cite{ho1995random}. Here, at each candidate split in top-down learning, a random subset of features is selected and the best feature from the subset is selected based on the splitting criterion. The appropriate size of the random  subset of features is data-dependent \cite{hastie2009elements}, and selecting it using a data-driven approach is an interesting question for further research.
\end{remark}

\section{Learning gradient-boosted trees}
Gradient-boosted trees are widely known as the state-of-the-art for learning from tabular data, typically outperforming even neural networks \cite{chen2016xgboost,prokhorenkova2018catboost,mcelfresh2023neural}. Given  a labeled dataset $D=(X,y)$ over some input domain $X\in\cX^n$ and $y\in\R^n$, a tree ensemble model learns a collection of $K$ trees, and the prediction of the ensemble model is simply the sum of the predictions of individual trees in the model. That is,

$$\hat{y}_i=\phi(X_i)=\sum_{k=1}^KT_k(X_i),$$

\noindent where $T_k$ denotes the $k$-th tree in the ensemble and $\hat{y}_i$ denotes the prediction of the ensemble model on datapoint $X_i$. In the extremely popular XGBoost technique, the goal is to minimize a regularized objective

$$L(\phi,D)=\frac{1}{n}\sum_{i=1}^n \ell(\phi(X_i), y_i)+\frac{1}{2}\lambda \sum_{k=1}^K \|T_k\|^2.$$

\noindent Here $\ell$ denotes the loss function (typically squared loss in regression, assumed differentiable) and $\|T_k\|^2=\sum_{i=1}^t w_{ik}^2$, where $w_{ik}$ denotes the weight of leaf $i$ in tree $T_k$. While exact optimization of this objective is computationally challenging, gradient-boosting approaches build trees in the ensemble in a sequence of greedy steps that choose the split candidate that most improves the model according to the above regularized loss. Algorithm \ref{alg:xgboost} describes the process of splitting a node in XGBoost. Here $\hat{y}_i^{(k)}=\sum_{j=1}^kT_j(X_i)$ denotes the prediction of the trees at iteration $k$.

\begin{algorithm}[tb]
\caption{XGBoost for split finding at any given node \cite{chen2016xgboost}}
\label{alg:xgboost}
\flushleft\textbf{Input}: Dataset $D_I$ at node $I$, tree index $k$\\
\textbf{Parameters}: Node function class $\cF$, tree size $t$\\
\textbf{Output}: $f\in\cF$ for splitting $I$
\begin{algorithmic}[1] 
\FOR{each datapoint $(X_i,y_i)\in D_I$}
\STATE $g_i\gets \partial_{\hat{y}_i^{(k-1)}}\ell(y_i, \hat{y}_i^{(k-1)})$
\STATE $h_i\gets \partial^2_{\hat{y}_i^{(k-1)}}\ell(y_i, \hat{y}_i^{(k-1)})$
\ENDFOR
\FOR{each $f$ in $\mathcal{F}$}
\STATE $D_L,D_R\gets $ split of dataset $D_I$ according to $f$
\STATE $G_L\gets \sum_{i\in D_L} g_i$, $H_L\gets \sum_{i\in D_L} h_i$
\STATE $G_R\gets \sum_{i\in D_R} g_i$, $H_R\gets \sum_{i\in D_R} h_i$
\STATE score$(f)\gets \frac{G_L^2}{H_L+\lambda}+\frac{G_R^2}{H_R+\lambda}-\frac{G^2}{H+\lambda}$
\ENDFOR
\STATE \textbf{return} split that maximizes score$(f)$
\end{algorithmic}
\end{algorithm}

\begin{theorem}\label{thm:xgboost} Consider the problem of tuning the regularization strength $\lambda$ in the  XGBoost algorithm (Algorithm \ref{alg:xgboost}). For any distribution $\mathcal{D}$ over problem instances of size $n$, given a sample of
    $O(\frac{1}{\epsilon^2}(tK\log (tK|\mathcal{F}|)+\log\frac{1}{\delta}))$ instances drawn from $\cD$,  ERM over the drawn sample learns paramater $\hat{\lambda}$ with expected loss at most $\epsilon$ larger than the best parameter $\lambda^*$ for $\cD$. Here $t$ denotes the maximum size of any tree in the ensemble, $|\cF|$ denotes the size of the node function class, and $K$ denotes the number of trees in the ensemble.
\end{theorem}

\begin{proof}
    Our overall technique is to design a GJ algorithm \cite{goldberg1993bounding} to compute the loss of the XGBoost algorithm on any fixed dataset $D$, as a function of the regularization strength $\lambda$. We bound the complexity of the GJ algorithm (including its degree and predicate complexity), and use results due to \cite{bartlett2022generalization} to establish our sample complexity guarantee (Theorem \ref{thm:gj}).

    Let $D_L$ and $D_R$ denote the left and right splits of the dataset at some fixed node $I$ of  tree $T_i$ when using a fixed node function $f\in\mathcal{F}$. The score for this split is given by 

    $$S(I, i, f) = \frac{G_L^2}{H_L+\lambda}+\frac{G_R^2}{H_R+\lambda}-\frac{G^2}{H+\lambda},$$

    \noindent where $G_L,G_R,G$ and $H_L,H_R,H$ correspond to first and second order gradient statistics on the loss function respectively, on the left split, right split and the complete dataset. For any split decision, there are at most $t$ choices for $I$, $K$ choices for $i$ and $|\mathcal{F}|$ choices for $f$. Thus, we can define a GJ algorithm with at most $t^2K^2|\mathcal{F}|^2$ predicates to compute the best split at any stage of the top-down algorithm. Overall, we have no more than $tK$ splits. Thus, the predicate complexity of the GJ algorithm is at most $(t^2K^2|\mathcal{F}|^2)^{tK}$. The degree of the GJ algorithm is at most 6. Thus, using Theorem \ref{thm:gj}, the pseudo-dimension of the dual loss function class is $O(tK\log tK|\mathcal{F}|)$, which implies the above sample complexity bounds using classical results from learning theory \cite{anthony1999neural}.
\end{proof}

\section{Optimizing the explainability versus accuracy trade-off}\label{sec:exp}

Decision trees are often regarded as one of the preferred models when the model predictions need to be explainable. Complex or large decision trees can however not only overfit the data but also hamper model interpretability. So far we have considered parameter tuning when building or pruning the decision tree with the goal of optimizing accuracy on unseen ``test'' datasets on which the decision tree is built using the learned hyperparameters. We will consider a modified objective here which incorporates model complexity in the test objective.
That is, we seek to find hyperparameters ${\alpha},{\beta},\tilde{\alpha}$ based on the training samples, so that on a random $D\sim\cD$, the expected loss

$$L_\eta:=\bbE_{D\sim\cD}L(T,D)+\eta |\mathrm{leaves}(T)|$$

\noindent is minimized, where $\eta\ge 0$ is the complexity coefficient. This objective has been  studied in a recent line of work which designs techniques for provably optimal decision trees with high interpretability \cite{hu2019optimal,lin2020generalized}. Note that, while the objective is similar to min cost-complexity pruning, there the regularization term $\tilde{\alpha}|\mathrm{leaves}(T)|$ is added to the training objective to get the best generalization accuracy on test data. In contrast, we add the regularization term to the test objective itself and $\eta$ here is a fixed parameter that governs the balance between accuracy and explainability that the learner aims to strike.

Our approach here is to combine tunable splitting and pruning to optimize the accuracy-explainability trade-off. We set $(\alpha,\beta)$-Tsallis entropy as the splitting criterion and min cost-complexity pruning with parameter $\tilde{\alpha}$ as the pruning algorithm. We show the following upper bound on the sample complexity when simultaneously learning to split and prune.\looseness-1

\begin{table*}[t]

\caption{A comparison of the performance of different splitting criteria. The first column indicates the best $(\alpha,\beta)$ parameters for each dataset over the grid considered in Figure \ref{fig:tsallis}. Acc denotes test accuracy along with a 95\% confidence interval.}
\centering

\begin{tabular}{lccccc}
\toprule
Dataset &
Best $(\alpha^*,\beta^*)$ &
Acc$(\alpha^*,\beta^*)$ &
Acc(Gini)&
Acc(Entropy) &
Acc(KM96)\\
\midrule
Iris &
(0.5,1) &
$96.00\pm1.85$ &
 $92.99\pm1.53$ &
$93.33\pm1.07$ &
$94.67\pm 2.70$ \\
Banknote &
(2.45,2) &
$98.32\pm0.52$ &
 $97.01\pm0.59$ &
$97.30\pm1.62$ &
$97.00\pm1.79$ \\
Breast cancer &
 $(0.5,3)$&
 $94.69\pm0.77$&
$92.92\pm1.29$  &
 $93.01\pm1.05$ &
$93.27\pm1.16$ \\
Wine &
(2.15,6) &
$96.57\pm 1.88$ &
 $89.14\pm 3.18$ &
$92.57\pm 2.38$ &
$93.71\pm 2.26$ \\
\bottomrule
\end{tabular}
\label{tab:ab}
\end{table*}

\begin{theorem}\label{thm:pdim-split-prune}
Suppose $\alpha>0,\beta\in[B],\tilde{\alpha}\ge0$. For any $\epsilon,\delta>0$ and any distribution $\cD$ over problem instances with $n$ examples, $O(\frac{1}{\epsilon^2}(t(\log |\cF|+\log t+c\log(B+c))+\log\frac{1}{\delta}))$ samples  drawn from $\cD$ are sufficient to ensure that with probability at least $1-\delta$ over the draw of the samples, the parameters learned by ERM for $L_\eta$ are $\epsilon$-optimal.
\end{theorem}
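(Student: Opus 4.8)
The plan is to study the dual loss function, i.e.\ $L_\eta$ viewed as a function of the three hyperparameters $(\alpha,\beta,\tilde{\alpha})$ for a fixed problem instance $D$, show that it is piecewise constant with a bounded number of pieces, and then recover the pseudodimension and sample complexity exactly along the lines of Theorem \ref{thm:pdim-alpha-beta}. The starting observation is that $L_\eta$ depends on $(\alpha,\beta,\tilde{\alpha})$ only through the final pruned tree: both the $0$-$1$ term $L(T,D)$ and the complexity term $\eta|\mathrm{leaves}(T)|$ are constant whenever the output tree is fixed. Hence it suffices to count the number of distinct trees the combined split-and-prune procedure can output as the parameters range over $\R^+\times[B]\times\R_{\ge0}$, and the additive regularizer $\eta|\mathrm{leaves}(T)|$ creates no new boundaries of its own.

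The key structural step is that the split-then-prune pipeline factorizes cleanly across the parameters. The unpruned tree $T_{\cF,(\alpha,\beta),t}(D)$ produced by Algorithm \ref{alg:td-learning} depends only on $(\alpha,\beta)$, while the pruned subtree depends on $\tilde{\alpha}$ only through this unpruned tree. First I would invoke the analysis underlying Theorem \ref{thm:pdim-alpha-beta}, which partitions $(\alpha,\beta)$-space into at most $O(B|\cF|^{2t}t^{2t}(B+c)^{ct})$ regions on each of which the unpruned tree $T$ is fixed. Fixing any such region, $T$ is now determined, so the min cost-complexity pruning analysis of Theorem \ref{thm:pdim-alpha-tilde} (illustrated in Figure \ref{fig:pruning}) applies verbatim: as $\tilde{\alpha}$ grows from $0$ to $\infty$ the algorithm produces a nested sequence of at most $t$ subtrees, so the pruned tree takes at most $t$ distinct values, each on a subinterval of $\tilde{\alpha}$. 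Multiplying the two counts, the full parameter space $(\alpha,\beta,\tilde{\alpha})$ is partitioned into at most $O(B|\cF|^{2t}t^{2t+1}(B+c)^{ct})$ regions on each of which the final tree, and therefore $L_\eta$, is constant.

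To finish, I would take the logarithm of this piece count: the extra factor of $t$ from the pruning stage contributes only an additional $\log t$, which is absorbed into the $t\log t$ term already present, so the pseudodimension of the induced loss class is $O(t(\log|\cF|+\log t+c\log(B+c)))$, matching Theorem \ref{thm:pdim-alpha-beta}. Combining this piecewise-constant structure with the bounded range of the objective $L_\eta$ (the complexity coefficient $\eta$ being fixed) and the standard pieces-to-pseudodimension bound (Lemma 3.8 of \cite{balcan2021much}), and then applying classic uniform convergence results, yields the claimed sample complexity.

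The main obstacle I expect is justifying the factorization of the parameter dependence rigorously: although the $\tilde{\alpha}$-breakpoints differ from region to region, one must verify that conditioning on a fixed unpruned tree renders the entire pruning path independent of $(\alpha,\beta)$, so that the two piece counts genuinely multiply rather than interact in a more intricate way. Relatedly, I would be careful that the nonlinear $(\alpha,\beta)$ boundaries from Theorem \ref{thm:pdim-alpha-beta} (controlled there via the multinomial theorem and Rolle's theorem) are entirely unaffected by appending a pruning stage, since the pruning merely refines each region along the orthogonal $\tilde{\alpha}$ axis. Once this clean product structure is established, everything else reduces to the already-developed machinery.
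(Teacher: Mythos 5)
Your proposal is correct and follows essentially the same route as the paper's proof: decompose the parameter space into the $O(B|\cF|^{2t}t^{2t}(B+c)^{ct})$ regions from Theorem \ref{thm:pdim-alpha-beta} on which the unpruned tree is fixed, refine each by the at most $t$ nested pruned subtrees from Theorem \ref{thm:pdim-alpha-tilde}, and pass the resulting $O(B|\cF|^{2t}t^{2t+1}(B+c)^{ct})$ piece count through Lemma \ref{lem:ddad-1} and Theorem \ref{thm:pdim}. Your additional care about the factorization (the pruning path depending on $(\alpha,\beta)$ only through the fixed unpruned tree) and about $\eta|\mathrm{leaves}(T)|$ introducing no new boundaries is exactly the right justification, which the paper leaves implicit.
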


\begin{proof}
As argued in the proof of Theorem \ref{thm:pdim-alpha-beta}, there is a bound of $O(B|\cF|^{2t}t^{2t}(B+c)^{ct})$ on the number of distinct algorithmic behavior of the top-down learning algorithm in growing a tree of size $t$ as the parameters $\alpha,\beta$ are varied. Further, as argued in the proof of Theorem \ref{thm:pdim-alpha-tilde}, for each of these learned trees, there are at most $t$ distinct pruned trees as $\tilde{\alpha}$ is varied. Overall, this corresponds to $O(B|\cF|^{2t}t^{2t+1}(B+c)^{ct})$ distinct behaviors, which implies the claimed sample complexity bound using standard tools from learning theory and data-driven algorithm design (Lemma \ref{lem:ddad-1}, Theorem \ref{thm:pdim}).\looseness-1
\end{proof}

\section{Combined algorithm and hyperparameter selection}

There has been a recent surge of research interest in developing approaches for not just tuning hyperparameters in machine learning, but selecting a combination of an algorithm and its tuned hyperparameters from a class of candidate parameterized algorithms \cite{thornton2013auto,li2020efficient}. While we study the sample complexity of learning the best hyperparameter for several different parameterized algorithm families that learn tree-based models, it is a natural question to ask how to choose the most appropriate algorithm family for a given problem domain in the first place. The vast majority of applied literature in machine learning simply recommends the ``better'' algorithm by comparing the performance on a small finite set of hyperparameters for each algorithm family under study, and lack principled insights or provable guarantees. In contrast, our results in this section will imply a provable upper bound on the sample complexity of learning the best combination of algorithm and its hyperparameter.

We will now state and prove a general result for learning the best algorithm + hyperparameter combination, given a collection of $\kappa$ algorithm families $A_1,\dots,A_{\kappa}$ defined on a common instance space $\mathcal{X}$. Suppose each algorithm family $A_i$ is associated with a hyperparameter space $\mathcal{A}_i\subset\mathbb{R}^{d_i}$ and a corresponding loss function $\ell_{i}:\cX\times \mathcal{A}_i\rightarrow[0,H]$. Further assume finite bounds on the pseudo-dimension of the loss function classes $\cL_i:=\{l_{\alpha}:\cX\rightarrow[0,H]\mid \alpha\in \mathcal{A}_i\}$, i.e.\ $\mathrm{Pdim}(\cL_i)\le \cP_i$. Then we have the following result bounding the pseudo-dimension of $\cL_{\cup k}:=\{l_{\alpha}:\cX\rightarrow[0,H]\mid \alpha\in \cup_{i=1}^k\mathcal{A}_i\}$.

\begin{theorem}\label{thm:cash}
    Let $\cL_i:=\{l_{\alpha}:\cX\rightarrow[0,H]\mid \alpha\in \mathcal{A}_i\}$ and $\cL_{\cup k}:=\{l_{\alpha}:\cX\rightarrow[0,H]\mid \alpha\in \cup_{i=1}^k\mathcal{A}_i\}$. Then $\mathrm{Pdim}(\cL_{\cup k})\le \frac{e^2}{e^2-1}\left(\log k + \max_i\mathrm{Pdim}(\cL_i)\right)$.
\end{theorem}

\begin{proof}
    Suppose $S = \{x_1, \dots, x_m\} \subseteq \cX$ is pseudo-shattered by $\cL_{\cup k}$. Then (by Definition \ref{def:pdim}) there must exist real thresholds $r_1, \dots, r_m$ such that for each $b \in \{0, 1\}^m$ there is a function $f_b$ in $\cL_{\cup k}$ with $\mathrm{sign}(f_b(x_i) - r_i) = b_i$ for $i \in [m]$. That is, the number of behaviors $|\{(\sign(f(x_1)-r_1),\dots,\sign(f(x_m)-r_m))\mid f\in \cL_{\cup k}\}|=2^m$. Now for any $\cL_i$, by Sauer's lemma (unless $m\le\mathrm{Pdim}(\cL_i)$, which is not relevant for the claimed upper bound), we have that
    $$|\{(\sign(f(x_1)-r_1),\dots,\sign(f(x_m)-r_m))\mid f\in \cL_{i}\}|\le 
    \left(\frac{em}{\mathrm{Pdim}(\cL_i)}\right)^{\mathrm{Pdim}(\cL_i)}$$
    This implies 
    \begin{align*}
        2^m&=|\{(\sign(f(x_1)-r_1),\dots,\sign(f(x_m)-r_m))\mid f\in \cL_{\cup k}\}|\\ &\le \sum_{i=1}^k|\{(\sign(f(x_1)-r_1),\dots,\sign(f(x_m)-r_m))\mid f\in \cL_{i}\}|\\
        & \le \sum_{i=1}^k \left(\frac{em}{\mathrm{Pdim}(\cL_i)}\right)^{\mathrm{Pdim}(\cL_i)}
       \\
        &\le k\left(\frac{em}{\mathrm{Pdim}(\cL_j)}\right)^{\mathrm{Pdim}(\cL_j)},
    \end{align*}
    \noindent where $j=\argmax_{i\in[k]}\left(\frac{em}{\mathrm{Pdim}(\cL_i)}\right)^{\mathrm{Pdim}(\cL_i)}$. Therefore, 
    \begin{align*}
        m&\le \log k + \mathrm{Pdim}(\cL_j)\log\left(\frac{em}{\mathrm{Pdim}(\cL_j)}\right)\\
        &\le \log k +\mathrm{Pdim}(\cL_j)\left(\frac{em}{\mathrm{Pdim}(\cL_j)}\cdot \frac{1}{e^2}+\log e^2-1\right),
    \end{align*}
    using the inequality $\log a\le ab-\log b -1$ for all $a,b>0$ \cite{anthony1999neural}. Rearranging gives,
    \begin{align*}
        m&\le \frac{e^2}{e^2-1}\left(\log k + \mathrm{Pdim}(\cL_j)\right)
    \end{align*}
    which implies $m={O}(\log k+\max_i\mathrm{Pdim}(\cL_i))$ as desired.
\end{proof}

\noindent We remark that the above result allows for a general selection for a combination of algorithm families together with their hyperparameters. We give an example instantiation below in our context above.

\begin{example}
Suppose we have to decide whether we want to use a single decision tree and tune its $(\alpha,\beta)$-splitting criterion followed by a minimum cost-complexity pruning with parameter $\tilde{\alpha}$, or whether we should use a gradient-boosted decision tree ensemble by deploying XGBoost with regularization strength $\lambda$. Our goal is to build the most accurate tuned model by striking the right balance between accuracy and interpretability as given by the loss $L_{\eta}$ (Section \ref{sec:exp}). 
Now, Theorem \ref{thm:cash}, together with Theorems \ref{thm:xgboost} and \ref{thm:pdim-split-prune}, imply that the sample complexity of combined algorithm and hyperparameter selection here is $$O\left(\frac{1}{\epsilon^2}\left(t(\max\{K\log tK|\cF|,\log t|\cF|+c\log(B+c)\})+\log\frac{1}{\delta}\right)\right).$$
\end{example}

\section{Experiments}

We examine the significance of the novel splitting techniques and the importance of designing data-driven decision tree learning algorithms via hyperparameter tuning for various benchmark datasets. We only perform small-scale simulations that can be run on a personal computer. 
The datasets used are from the UCI repository, are publicly available and are briefly described below.

\begin{figure}[t]
\centering
\includegraphics[width=0.6\textwidth]{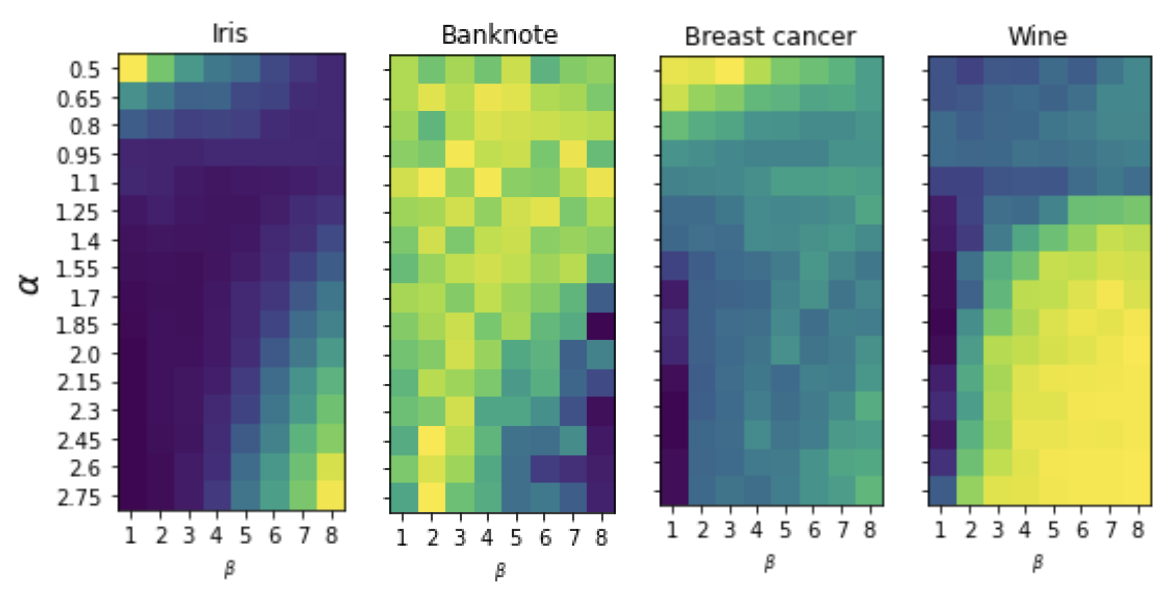} 
\caption{Average test accuracy (proportional to brightness, yellow is highest) of $(\alpha,\beta)$-Tsallis entropy  based splitting criterion as the parameters are varied, across datasets. We observe that different parameter settings work best for each dataset, highlighting the need to learn data-specific values. }
\label{fig:tsallis}
\end{figure}

\textit{Iris} \cite{fisher1936use} consists of three classes of the iris plant and four real-valued attributes. A total of 150 instances, 50 per class. \textit{Wine} \cite{lichman2013uci} has three classes of wines, 13 real attributes and 178 data points in all. \textit{Breast cancer (Wisconsin diagnostic)}  contains 569 instances, with 30 features, and two classes, malignant and benign \cite{wolberg1994machine}. The {\it Banknote Authentication} dataset \cite{misc_banknote_authentication_267} also involves binary classification and has 1372 data points and five real attributes. These datasets are selected to capture a variety of attribute sizes and number of data points.\looseness-1

\paragraph{Tuning $(\alpha,\beta)$-Tsallis entropy.} We first study the effect of choice of $(\alpha,\beta)$ parameters in the Tsallis entropy based splitting criterion. For each dataset, we perform 5-fold cross validation  for a large grid of parameters depicted in Figure \ref{fig:tsallis} and measure the accuracy on held out test set consisting of 20\% of the datapoints {(i.e.\ training datasets are just random subsets of the 80\% of the dataset used for learning the parameters)}. We implement a slightly more sophisticated variant of Algorithm \ref{alg:td-learning} which grows the tree to maximum depth of 5 (as opposed to a fixed size $t$). We do not use any pruning here. There is a remarkable difference in the optimal parameter settings for different datasets. 
Moreover, we note in Table \ref{tab:ab}, that carefully chosen values of $(\alpha,\beta)$ significantly outperform standard heuristics like Gini impurity or entropy based splitting, or even principled methods like \cite{kearns1996boosting} for which worst-case error guarantees (assuming weak learning) are known. This further underlines the significance of data-driven algorithm design for decision tree learning.\looseness-1

\begin{figure}[t]
\centering
\includegraphics[width=0.8\textwidth]{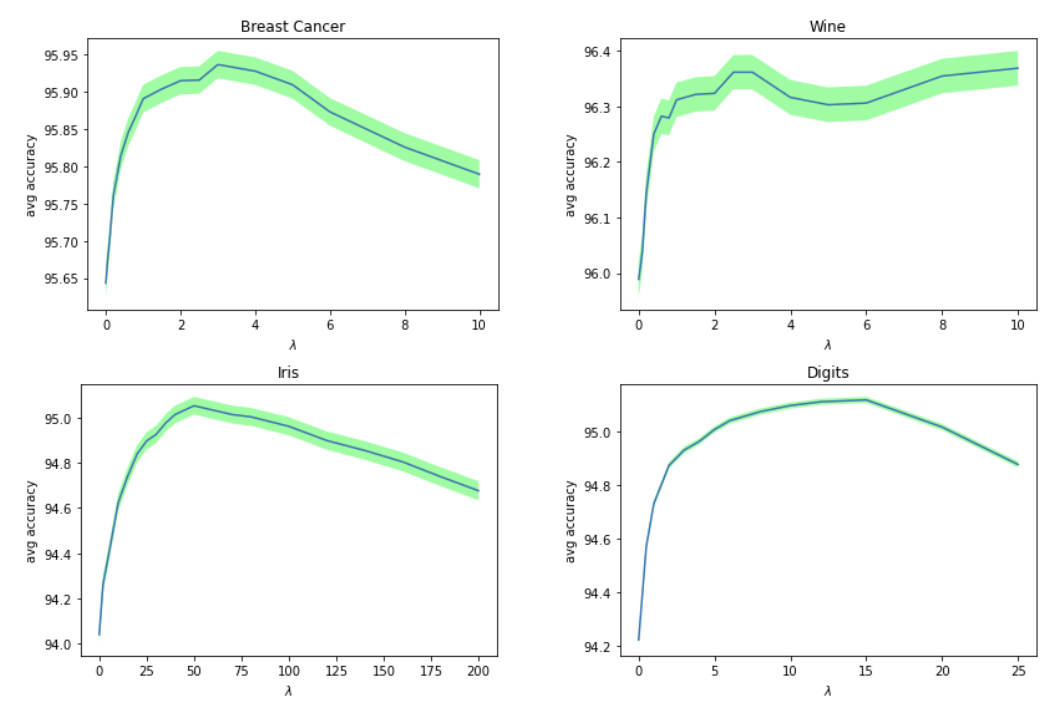} 
\caption{Average test accuracy as a function of the regularization strength $\lambda$ in XGBoost, across datasets. Different parameter settings work best for different datasets. }
\label{fig:boosting}
\end{figure}

\paragraph{Tuning the min cost-complexity parameter.} We further study the impact of tuning the complexity parameter $\tilde{\alpha}$ in the minimum cost-complexity pruning algorithm. The test error varies with $\tilde{\alpha}$ in a data dependent way and different data could have different optimal parameter as depicted in Figure \ref{fig:pruning}. We use Gini impurity as the splitting criterion. Furthermore, we observe that on a single instance, the average test error is a piecewise constant function with at most $t$ pieces which motivates the sample complexity bound in Theorem \ref{thm:pdim-alpha-tilde}.\looseness-1

\paragraph{Tuning the regularization strength $\lambda$ in XGBoost.} In Figure \ref{fig:boosting}, we show how the average test accuracy (random 80-20 split) depends on regularization parameter in XGBoost. We set $K=10$, the number of trees in the ensemble. We set the maximum depth of each tree to be 5, and set the learning rate to 1. We report the average accuracy for 10000 iterations as $\lambda$ is varied. We notice different datasets have different optimal settings for $\lambda$.\looseness-1

\paragraph{Explainability-accuracy frontier.} 
We  examine the explainability-accuracy trade-off as given by our regularized objective with complexity coefficient $\eta$. In Figure \ref{fig:exp}, we plot the explainability-accuracy frontier as the pruning parameter $\tilde{\alpha}$ is varied. Here we fix the splitting criterion as the Gini impurity. For a given dataset, this frontier can be pushed by a careful choice of the splitting criterion. 

We further study the effect of varying $\alpha$ (for fixed $\beta=1$) and $\beta$ (for fixed $\alpha=1.5$) on the explainability-accuracy trade-off. We fix $\eta=0.01$, and obtain the plot by varying the amount of pruning by changing the complexity parameter $\tilde{\alpha}$ in min-cost complexity pruning. 
We perform this study for Iris and Wine datasets in Figure \ref{fig:frontier}. We observe that for a given accuracy, the best (smallest size) explanation  could be obtained for different different splitting criteria (corresponding to settings of $\alpha,\beta$). In particular, different criteria can dominate in different regimes of size and $\eta$. Therefore, simultaneously tuning splitting criterion and pruning as in Theorem \ref{thm:pdim-split-prune} is well-motivated.

\begin{figure*}[t]
\centering
\includegraphics[width=0.77\textwidth]{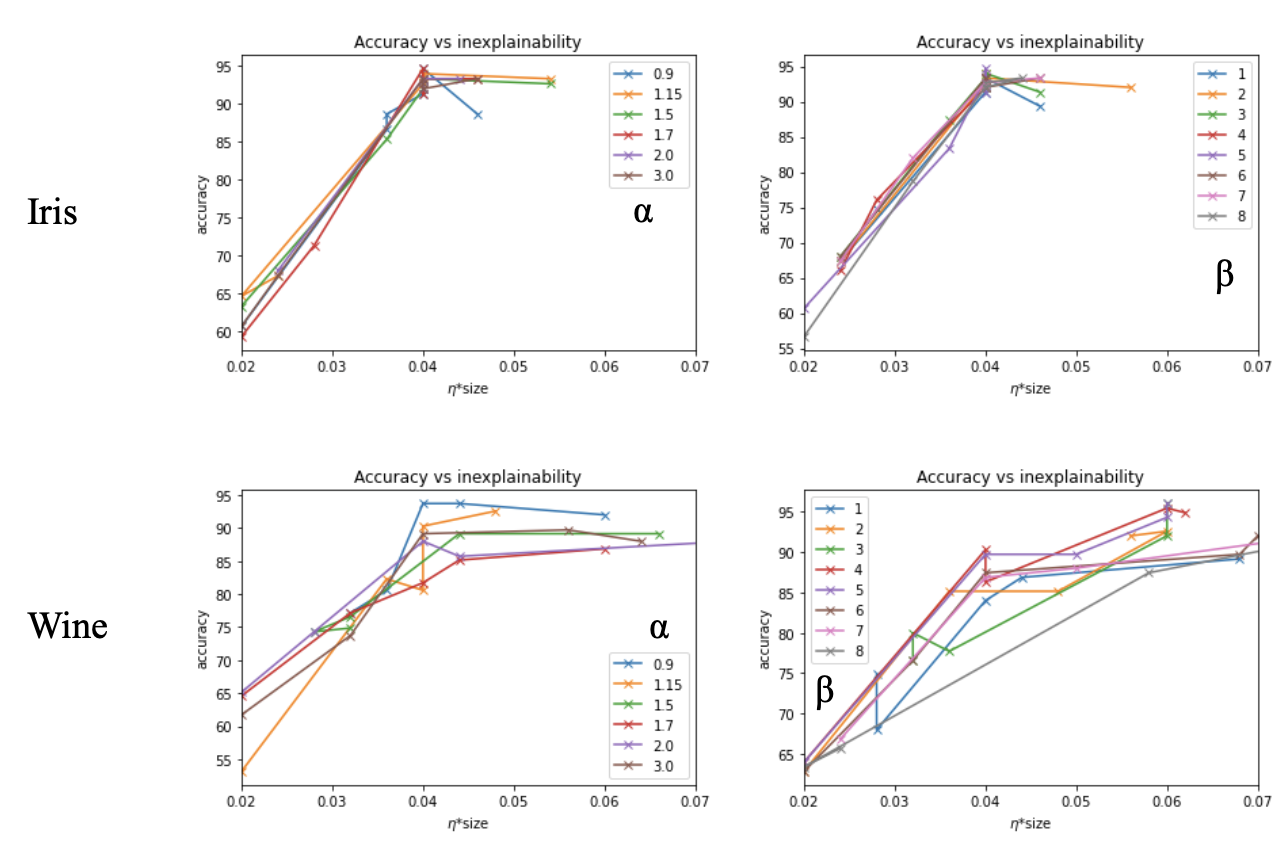} 
\caption{Accuracy-explainability frontier for different $\alpha$ or different $\beta$ in the $(\alpha,\beta)$-Tsallis entropy family for the splitting criterion, as the pruning parameter $\tilde{\alpha}$ is varied.}
\label{fig:frontier}
\end{figure*}

We defer further experiments on additional datasets and more pruning strategies to the appendix.\looseness-1

\section{Conclusion} 

We consider the problem of automatically designing decision tree based learning algorithms by data-driven selection of hyperparameters. Previous extensive research has observed that different ways to split or prune nodes when building a decision tree work best for data coming from different domain. We present a novel splitting criterion called $(\alpha,\beta)$-Tsallis entropy which interpolates popular previously known methods into a rich infinite class of algorithms. We consider the setting where we have repeated access to data from the same domain and provide formal bounds on the sample complexity of tuning the hyperparameters for the ERM principle. We extend our study to learning regression trees, selecting pruning parameters, and optimizing over the explainability-accuracy trade-off. We also establish guarantees for tuning hyperparameters in bagging and boosting based tree ensembles. We present a principled tool for analyzing the sample complexity of combined algorithm and hyperparameter selection. Empirical simulations validate our theoretical study and highlight the significance and usefulness of learning decision tree algorithms.

Our work presents several directions for future research. Our results provide extensive guarantees on sample efficiency, but it is not clear if polynomial-time computational efficiency can be achieved. Our efficient output-sensitive implementation of the ERM is a step towards understanding the computational complexity, but leaves open room for potentially more efficient implementations (or computational hardness results). 
Another direction for future research is designing and analyzing a potentially more powerful algorithm family for pruning, and extending our results on tree ensembles to other popular algorithms. We also remark that we focus on upper bounds on sample complexity, and providing corresponding lower bounds is an interesting avenue for further research.


\section*{Acknowledgements}
We thank Avrim Blum, Misha Khodak, Hedyeh Beyhaghi, Siddharth Prasad and Keegan Harris for helpful comments. This material is based on work supported by the National Science Foundation under grants CCF1910321, IIS 1901403, and SES 1919453; and the Defense Advanced Research Projects Agency under cooperative agreement HR00112020003.

\bibliographystyle{alpha}
{\bibliography{main}}

\newcommand{\etalchar}[1]{$^{#1}$}
\begin{thebibliography}{ROAG{\etalchar{+}}12}

\bibitem[AB99]{anthony1999neural}
Martin Anthony and Peter Bartlett.
\newblock {\em Neural network learning: {T}heoretical foundations}.
\newblock {Cambridge University Press}, 1999.

\bibitem[Bal20]{balcan2020data}
Maria-Florina Balcan.
\newblock {Data-Driven Algorithm Design} (book chapter).
\newblock In {\em Beyond Worst-Case Analysis of Algorithms, Tim Roughgarden (Ed)}. {Cambridge University Press}, 2020.

\bibitem[BB94]{bohanec1994trading}
Marko Bohanec and Ivan Bratko.
\newblock Trading accuracy for simplicity in decision trees.
\newblock {\em Machine Learning}, 15:223--250, 1994.

\bibitem[BB12]{bergstra2012random}
James Bergstra and Yoshua Bengio.
\newblock Random search for hyper-parameter optimization.
\newblock {\em Journal of Machine Learning Research (JMLR)}, 13(2), 2012.

\bibitem[BBSZ23]{balcan2023analysis}
Maria-Florina Balcan, Avrim Blum, Dravyansh Sharma, and Hongyang Zhang.
\newblock An analysis of robustness of non-{L}ipschitz networks.
\newblock {\em Journal of Machine Learning Research (JMLR)}, 24(98):1--43, 2023.

\bibitem[BDL20]{balcan2020learning}
Maria-Florina Balcan, Travis Dick, and Manuel Lang.
\newblock Learning to link.
\newblock In {\em International Conference on Learning Representation}, 2020.

\bibitem[BDS20]{sharma2020learning}
Maria-Florina Balcan, Travis Dick, and Dravyansh Sharma.
\newblock Learning piecewise {L}ipschitz functions in changing environments.
\newblock In {\em International Conference on Artificial Intelligence and Statistics}, pages 3567--3577. PMLR, 2020.

\bibitem[BDS21]{blum2021learning}
Avrim Blum, Chen Dan, and Saeed Seddighin.
\newblock Learning complexity of simulated annealing.
\newblock In {\em {International Conference on Artificial Intelligence and Statistics (AISTATS)}}, pages 1540--1548. PMLR, 2021.

\bibitem[BDSV18]{balcan2018learning}
Maria-Florina Balcan, Travis Dick, Tuomas Sandholm, and Ellen Vitercik.
\newblock Learning to branch.
\newblock In {\em International Conference on Machine Learning (ICML)}, pages 344--353. PMLR, 2018.

\bibitem[BDSV24]{balcan2024learning}
Maria-Florina Balcan, Travis Dick, Tuomas Sandholm, and Ellen Vitercik.
\newblock Learning to branch: Generalization guarantees and limits of data-independent discretization.
\newblock {\em Journal of the ACM}, 71(2):1--73, 2024.

\bibitem[BDV18]{balcan2018dispersion}
Maria-Florina Balcan, Travis Dick, and Ellen Vitercik.
\newblock Dispersion for data-driven algorithm design, online learning, and private optimization.
\newblock In {\em 2018 IEEE 59th Annual Symposium on Foundations of Computer Science (FOCS)}, pages 603--614. IEEE, 2018.

\bibitem[BDW18]{balcan2018data}
Maria-Florina Balcan, Travis Dick, and Colin White.
\newblock Data-driven clustering via parameterized {L}loyd's families.
\newblock {\em Advances in Neural Information Processing Systems (NeurIPS)}, 31, 2018.

\bibitem[BFOS84]{breiman1984classification}
Leo Breiman, JH~Friedman, RA~Olshen, and CJ~Stone.
\newblock {\em Classification and regression trees. {S}tatistics/probability series}.
\newblock Wadsworth Publishing Company, 1984.

\bibitem[BIW22]{bartlett2022generalization}
Peter Bartlett, Piotr Indyk, and Tal Wagner.
\newblock Generalization bounds for data-driven numerical linear algebra.
\newblock In {\em Conference on Learning Theory (COLT)}, pages 2013--2040. PMLR, 2022.

\bibitem[BKST21]{balcan2021learning}
Maria-Florina Balcan, Mikhail Khodak, Dravyansh Sharma, and Ameet Talwalkar.
\newblock Learning-to-learn non-convex piecewise-{L}ipschitz functions.
\newblock {\em Advances in Neural Information Processing Systems (NeurIPS)}, 34:15056--15069, 2021.

\bibitem[BKST22]{balcan2022provably}
Maria-Florina Balcan, Mikhail Khodak, Dravyansh Sharma, and Ameet Talwalkar.
\newblock Provably tuning the {ElasticNet} across instances.
\newblock {\em Advances in Neural Information Processing Systems (NeurIPS)}, 35:27769--27782, 2022.

\bibitem[BNS23]{balcan2023new}
Maria-Florina Balcan, Anh Nguyen, and Dravyansh Sharma.
\newblock New bounds for hyperparameter tuning of regression problems across instances.
\newblock {\em Advances in Neural Information Processing Systems (NeurIPS)}, 36, 2023.

\bibitem[BNS25a]{balcan2024algorithm}
Maria-Florina Balcan, Anh~Tuan Nguyen, and Dravyansh Sharma.
\newblock Algorithm configuration for structured pfaffian settings.
\newblock {\em Transactions on Machine Learning Research (TMLR), to appear}, 2025.

\bibitem[BNS25b]{balcan2025sample}
Maria-Florina Balcan, Anh~Tuan Nguyen, and Dravyansh Sharma.
\newblock Sample complexity of data-driven tuning of model hyperparameters in neural networks with structured parameter-dependent dual function.
\newblock {\em arXiv preprint arXiv:2501.13734}, 2025.

\bibitem[BPS24]{balcan2024subsidy}
Maria-Florina Balcan, Matteo Pozzi, and Dravyansh Sharma.
\newblock Subsidy design for better social outcomes.
\newblock {\em arXiv preprint arXiv:2409.03129}, 2024.

\bibitem[BPSV21]{balcan2021sample}
Maria-Florina Balcan, Siddharth Prasad, Tuomas Sandholm, and Ellen Vitercik.
\newblock Sample complexity of tree search configuration: Cutting planes and beyond.
\newblock {\em Advances in Neural Information Processing Systems (NeurIPS)}, 34:4015--4027, 2021.

\bibitem[BPSV22]{balcan2022improved}
Maria-Florina Balcan, Siddharth Prasad, Tuomas Sandholm, and Ellen Vitercik.
\newblock Improved sample complexity bounds for branch-and-cut.
\newblock In {\em International Conference on Principles and Practice of Constraint Programming (CP)}, 2022.

\bibitem[Bre96]{breiman1996bagging}
Leo Breiman.
\newblock Bagging predictors.
\newblock {\em Machine learning}, 24:123--140, 1996.

\bibitem[Bre01]{breiman2001random}
Leo Breiman.
\newblock Random forests.
\newblock {\em Machine learning}, 45:5--32, 2001.

\bibitem[BS21]{balcan2021data}
Maria-Florina Balcan and Dravyansh Sharma.
\newblock Data driven semi-supervised learning.
\newblock {\em Advances in Neural Information Processing Systems (NeurIPS)}, 34:14782--14794, 2021.

\bibitem[BSS24]{balcan2024accelerating}
Maria-Florina Balcan, Christopher Seiler, and Dravyansh Sharma.
\newblock Accelerating {ERM} for data-driven algorithm design using output-sensitive techniques.
\newblock {\em Advances in Neural Information Processing Systems (NeurIPS)}, 37:72648--72687, 2024.

\bibitem[CG16]{chen2016xgboost}
Tianqi Chen and Carlos Guestrin.
\newblock {XGBoost: A} scalable tree boosting system.
\newblock In {\em Proceedings of the 22nd ACM SIGKDD International Conference on Knowledge Discovery and Data Mining}, KDD '16, page 785–794, New York, NY, USA, 2016. Association for Computing Machinery.

\bibitem[CGM98]{chipman1998bayesian}
Hugh~A Chipman, Edward~I George, and Robert~E McCulloch.
\newblock Bayesian {CART} model search.
\newblock {\em Journal of the American Statistical Association (JASA)}, 93(443):935--948, 1998.

\bibitem[CGM02]{chipman2002bayesian}
Hugh~A Chipman, Edward~I George, and Robert~E McCulloch.
\newblock Bayesian treed models.
\newblock {\em Machine Learning}, 48:299--320, 2002.

\bibitem[CNG18]{chow2018path}
Yinlam Chow, Ofir Nachum, and Mohammad Ghavamzadeh.
\newblock Path consistency learning in {T}sallis entropy regularized {MDP}s.
\newblock In {\em International Conference on Machine Learning (ICML)}, pages 979--988. PMLR, 2018.

\bibitem[CNK{\etalchar{+}}12]{misc_seeds_236}
Magorzata Charytanowicz, Jerzy Niewczas, Piotr Kulczycki, Piotr Kowalski, and Szymon Lukasik.
\newblock {Seeds}.
\newblock UCI Machine Learning Repository, 2012.

\bibitem[DLH{\etalchar{+}}22]{demirovic2022murtree}
Emir Demirovi{\'c}, Anna Lukina, Emmanuel Hebrard, Jeffrey Chan, James Bailey, Christopher Leckie, Kotagiri Ramamohanarao, and Peter~J Stuckey.
\newblock Murtree: {O}ptimal decision trees via dynamic programming and search.
\newblock {\em Journal of Machine Learning Research (JMLR)}, 23(1):1169--1215, 2022.

\bibitem[DRCB15]{de2015splitting}
Rocco De~Rosa and Nicolo Cesa-Bianchi.
\newblock Splitting with confidence in decision trees with application to stream mining.
\newblock In {\em International Joint Conference on Neural Networks (IJCNN)}, pages 1--8. IEEE, 2015.

\bibitem[EMSK97]{esposito1997comparative}
Floriana Esposito, Donato Malerba, Giovanni Semeraro, and J~Kay.
\newblock A comparative analysis of methods for pruning decision trees.
\newblock {\em IEEE Transactions on Pattern Analysis and Machine Intelligence (TPAMI)}, 19(5):476--491, 1997.

\bibitem[EMST99]{esposito1999effects}
Floriana Esposito, Donato Malerba, Giovanni Semeraro, and Valentina Tamma.
\newblock The effects of pruning methods on the predictive accuracy of induced decision trees.
\newblock {\em Applied Stochastic Models in Business and Industry (ASMBI)}, 15(4):277--299, 1999.

\bibitem[Fis36]{fisher1936use}
Ronald~A Fisher.
\newblock The use of multiple measurements in taxonomic problems.
\newblock {\em Annals of Eugenics}, 7(2):179--188, 1936.

\bibitem[Ger87]{misc_glass_identification_42}
B.~German.
\newblock {Glass Identification}.
\newblock {UCI} Machine Learning Repository, 1987.

\bibitem[GJ93]{goldberg1993bounding}
Paul Goldberg and Mark Jerrum.
\newblock Bounding the {Vapnik-Chervonenkis} dimension of concept classes parameterized by real numbers.
\newblock In {\em Proceedings of the sixth annual conference on Computational learning theory}, pages 361--369, 1993.

\bibitem[GOV22]{grinsztajn2022tree}
L{\'e}o Grinsztajn, Edouard Oyallon, and Ga{\"e}l Varoquaux.
\newblock Why do tree-based models still outperform deep learning on typical tabular data?
\newblock In {\em 36th Conference on Neural Information Processing Systems (NeurIPS 2022) Track on Datasets and Benchmarks}, 2022.

\bibitem[GR16]{gupta2016pac}
Rishi Gupta and Tim Roughgarden.
\newblock A {PAC} approach to application-specific algorithm selection.
\newblock In {\em Innovations in Theoretical Computer Science (ITCS)}, pages 123--134, 2016.

\bibitem[Ho95]{ho1995random}
Tin~Kam Ho.
\newblock Random decision forests.
\newblock In {\em Proceedings of 3rd international conference on document analysis and recognition}, volume~1, pages 278--282. IEEE, 1995.

\bibitem[HRS19]{hu2019optimal}
Xiyang Hu, Cynthia Rudin, and Margo Seltzer.
\newblock Optimal sparse decision trees.
\newblock {\em Advances in Neural Information Processing Systems (NeurIPS)}, 32, 2019.

\bibitem[HTFF09]{hastie2009elements}
Trevor Hastie, Robert Tibshirani, Jerome~H Friedman, and Jerome~H Friedman.
\newblock {\em The elements of statistical learning: data mining, inference, and prediction}, volume~2.
\newblock Springer, 2009.

\bibitem[KARL18]{misc_cryotherapy_dataset__429}
Fahime Khozeimeh, Roohallah Alizadehsani, Mohamad Roshanzamir, and Pouran Layegh.
\newblock {Cryotherapy Dataset }.
\newblock UCI Machine Learning Repository, 2018.

\bibitem[KM96]{kearns1996boosting}
Michael Kearns and Yishay Mansour.
\newblock On the boosting ability of top-down decision tree learning algorithms.
\newblock In {\em Symposium on Theory of Computing (STOC)}, pages 459--468, 1996.

\bibitem[KOH{\etalchar{+}}23]{khodak2024meta}
Misha Khodak, Ilya Osadchiy, Keegan Harris, Maria-Florina Balcan, Kfir~Y Levy, Ron Meir, and Steven~Z Wu.
\newblock Meta-learning adversarial bandit algorithms.
\newblock {\em Advances in Neural Information Processing Systems (NeurIPS)}, 36, 2023.

\bibitem[KST24a]{koch2024fast}
Caleb Koch, Carmen Strassle, and Li-Yang Tan.
\newblock Fast decision tree learning solves hard coding-theoretic problems.
\newblock In {\em 2024 IEEE 65th Annual Symposium on Foundations of Computer Science (FOCS)}, pages 1893--1910. IEEE, 2024.

\bibitem[KST24b]{koch2024superconstant}
Caleb Koch, Carmen Strassle, and Li-Yang Tan.
\newblock Superconstant inapproximability of decision tree learning.
\newblock {\em 37th Annual Conference on Learning Theory (COLT)}, 196:1--32, 2024.

\bibitem[L{\etalchar{+}}13]{lichman2013uci}
Moshe Lichman et~al.
\newblock {UCI} machine learning repository, 2013.

\bibitem[LG19]{loyola2019black}
Octavio Loyola-Gonzalez.
\newblock Black-box vs. white-box: Understanding their advantages and weaknesses from a practical point of view.
\newblock {\em IEEE Access}, 7:154096--154113, 2019.

\bibitem[LJG{\etalchar{+}}20]{li2020efficient}
Yang Li, Jiawei Jiang, Jinyang Gao, Yingxia Shao, Ce~Zhang, and Bin Cui.
\newblock Efficient automatic {CASH} via rising bandits.
\newblock In {\em Proceedings of the AAAI Conference on Artificial Intelligence}, volume~34, pages 4763--4771, 2020.

\bibitem[LL14]{larose2014discovering}
Daniel~T Larose and Chantal~D Larose.
\newblock {\em Discovering knowledge in data: {A}n introduction to data mining}, volume~4.
\newblock John Wiley \& Sons, 2014.

\bibitem[Loh13]{misc_banknote_authentication_267}
Volker Lohweg.
\newblock {Banknote authentication}.
\newblock UCI Machine Learning Repository, 2013.

\bibitem[LR76]{laurent1976constructing}
Hyafil Laurent and Ronald~L Rivest.
\newblock Constructing optimal binary decision trees is {NP}-complete.
\newblock {\em Information processing letters}, 5(1):15--17, 1976.

\bibitem[LZH{\etalchar{+}}20]{lin2020generalized}
Jimmy Lin, Chudi Zhong, Diane Hu, Cynthia Rudin, and Margo Seltzer.
\newblock Generalized and scalable optimal sparse decision trees.
\newblock In {\em International Conference on Machine Learning (ICML)}, pages 6150--6160. PMLR, 2020.

\bibitem[Man97]{mansour1997pessimistic}
Yishay Mansour.
\newblock Pessimistic decision tree pruning based on tree size.
\newblock In {\em International Conference on Machine Learning (ICML)}, pages 195--201, 1997.

\bibitem[Mas00]{massart2000some}
Pascal Massart.
\newblock Some applications of concentration inequalities to statistics.
\newblock In {\em Annales de la Facult{\'e} des sciences de Toulouse: Math{\'e}matiques}, volume~9, pages 245--303, 2000.

\bibitem[Meg78]{megiddo1978combinatorial}
Nimrod Megiddo.
\newblock Combinatorial optimization with rational objective functions.
\newblock In {\em Symposium on Theory of Computing (STOC)}, pages 1--12, 1978.

\bibitem[Min87]{mingers1987expert}
John Mingers.
\newblock Expert systems—rule induction with statistical data.
\newblock {\em Journal of the Operational Research Society (JORS)}, 38:39--47, 1987.

\bibitem[Min89a]{mingers1989empirical}
John Mingers.
\newblock An empirical comparison of pruning methods for decision tree induction.
\newblock {\em Machine Learning}, 4:227--243, 1989.

\bibitem[Min89b]{mingers1989empiricalb}
John Mingers.
\newblock An empirical comparison of selection measures for decision-tree induction.
\newblock {\em Machine learning}, 3:319--342, 1989.

\bibitem[MKV{\etalchar{+}}23]{mcelfresh2023neural}
Duncan McElfresh, Sujay Khandagale, Jonathan Valverde, Vishak Prasad~C, Ganesh Ramakrishnan, Micah Goldblum, and Colin White.
\newblock When do neural nets outperform boosted trees on tabular data?
\newblock {\em Advances in Neural Information Processing Systems (NeurIPS)}, 36:76336--76369, 2023.

\bibitem[Mol19]{molnar2020interpretable}
Christoph Molnar.
\newblock {\em Interpretable {M}achine {L}earning}.
\newblock 2019.

\bibitem[MP93]{murphy1993exploring}
Patrick~M Murphy and Michael~J Pazzani.
\newblock Exploring the decision forest: An empirical investigation of {O}ccam's razor in decision tree induction.
\newblock {\em Journal of Artificial Intelligence Research}, 1:257--275, 1993.

\bibitem[MR14]{maimon2014data}
Oded~Z Maimon and Lior Rokach.
\newblock {\em Data mining with decision trees: theory and applications}, volume~81.
\newblock World scientific, 2014.

\bibitem[MRT18]{mohri2018foundations}
Mehryar Mohri, Afshin Rostamizadeh, and Ameet Talwalkar.
\newblock {\em Foundations of {M}achine {L}earning}.
\newblock MIT press, 2018.

\bibitem[Mur98]{murthy1998automatic}
Sreerama~K Murthy.
\newblock Automatic construction of decision trees from data: A multi-disciplinary survey.
\newblock {\em Data mining and knowledge discovery}, 2:345--389, 1998.

\bibitem[OM99]{opitz1999popular}
David Opitz and Richard Maclin.
\newblock Popular ensemble methods: An empirical study.
\newblock {\em Journal of Artificial Intelligence Research}, 11:169--198, 1999.

\bibitem[PGV{\etalchar{+}}18]{prokhorenkova2018catboost}
Liudmila Prokhorenkova, Gleb Gusev, Aleksandr Vorobev, Anna~Veronika Dorogush, and Andrey Gulin.
\newblock {CatBoost}: unbiased boosting with categorical features.
\newblock {\em Advances in Neural Information Processing Systems (NeurIPS)}, 31, 2018.

\bibitem[PVG{\etalchar{+}}11]{pedregosa2011scikit}
Fabian Pedregosa, Ga{\"e}l Varoquaux, Alexandre Gramfort, Vincent Michel, Bertrand Thirion, Olivier Grisel, Mathieu Blondel, Peter Prettenhofer, Ron Weiss, Vincent Dubourg, et~al.
\newblock Scikit-learn: Machine learning in {P}ython.
\newblock {\em Journal of Machine Learning Research (JMLR)}, 12:2825--2830, 2011.

\bibitem[Qui86]{quinlan1986induction}
J.~Ross Quinlan.
\newblock Induction of decision trees.
\newblock {\em Machine Learning}, 1:81--106, 1986.

\bibitem[Qui87]{quinlan1987simplifying}
J.~Ross Quinlan.
\newblock Simplifying decision trees.
\newblock {\em International Journal of Man-Machine Studies (IJMMS)}, 27(3):221--234, 1987.

\bibitem[Qui93]{quinlan2014c4}
J~Ross Quinlan.
\newblock {\em C4.5: Programs for Machine Learning}.
\newblock Morgan Kaufmann, 1993.

\bibitem[Qui96]{quinlan1996learning}
J.~Ross Quinlan.
\newblock Learning decision tree classifiers.
\newblock {\em ACM Computing Surveys (CSUR)}, 28(1):71--72, 1996.

\bibitem[Qui98]{quinlan1998miniboosting}
J~Ross Quinlan.
\newblock Miniboosting decision trees.
\newblock {\em Journal of Artificial Intelligence Research}, 10:1--15, 1998.

\bibitem[ROAG{\etalchar{+}}12]{misc_human_activity_recognition_using_smartphones_240}
Jorge Reyes-Ortiz, Davide Anguita, Alessandro Ghio, Luca Oneto, and Xavier Parra.
\newblock {Human Activity Recognition Using Smartphones}.
\newblock UCI Machine Learning Repository, 2012.

\bibitem[Rud18]{rudin2018please}
Cynthia Rudin.
\newblock Please stop explaining black box models for high stakes decisions.
\newblock {\em Stat}, 1050:26, 2018.

\bibitem[Rud19]{rudin2019stop}
Cynthia Rudin.
\newblock Stop explaining black box machine learning models for high stakes decisions and use interpretable models instead.
\newblock {\em Nature Machine Intelligence}, 1(5):206--215, 2019.

\bibitem[Sha24]{sharma2024no}
Dravyansh Sharma.
\newblock No internal regret with non-convex loss functions.
\newblock In {\em Proceedings of the AAAI Conference on Artificial Intelligence}, volume~38, pages 14919--14927, 2024.

\bibitem[SJ23]{sharma2023efficiently}
Dravyansh Sharma and Maxwell Jones.
\newblock Efficiently learning the graph for semi-supervised learning.
\newblock In {\em Uncertainty in Artificial Intelligence}, pages 1900--1910. PMLR, 2023.

\bibitem[SMTI19]{speiser2019comparison}
Jaime~Lynn Speiser, Michael~E Miller, Janet Tooze, and Edward Ip.
\newblock A comparison of random forest variable selection methods for classification prediction modeling.
\newblock {\em Expert systems with applications}, 134:93--101, 2019.

\bibitem[SS25]{sharma2025offline}
Dravyansh Sharma and Arun~Sai Suggala.
\newblock Offline-to-online hyperparameter transfer for stochastic bandits.
\newblock {\em Association for the Advancement of Artificial Intelligence (AAAI)}, 2025.

\bibitem[SZA22]{shwartz2022tabular}
Ravid Shwartz-Ziv and Amitai Armon.
\newblock Tabular data: Deep learning is not all you need.
\newblock {\em Information Fusion}, 81:84--90, 2022.

\bibitem[THHLB13]{thornton2013auto}
Chris Thornton, Frank Hutter, Holger~H Hoos, and Kevin Leyton-Brown.
\newblock Auto-{WEKA}: Combined selection and hyperparameter optimization of classification algorithms.
\newblock In {\em Proceedings of the 19th ACM SIGKDD international conference on Knowledge discovery and data mining}, pages 847--855, 2013.

\bibitem[Tsa88]{tsallis1988possible}
Constantino Tsallis.
\newblock Possible generalization of {Boltzmann-Gibbs} statistics.
\newblock {\em Journal of Statistical Physics}, 52:479--487, 1988.

\bibitem[WSM94]{wolberg1994machine}
William~H Wolberg, W~Nick Street, and Olvi~L Mangasarian.
\newblock Machine learning techniques to diagnose breast cancer from image-processed nuclear features of fine needle aspirates.
\newblock {\em Cancer Letters}, 77(2-3):163--171, 1994.

\bibitem[WSX16]{wang2016improving}
Yisen Wang, Chaobing Song, and Shu-Tao Xia.
\newblock Improving decision trees by {T}sallis entropy information metric method.
\newblock In {\em International Joint Conference on Neural Networks (IJCNN)}, pages 4729--4734. IEEE, 2016.

\bibitem[WTW07]{wu2007bayesian}
Yuhong Wu, H{\aa}kon Tjelmeland, and Mike West.
\newblock Bayesian {CART}: Prior specification and posterior simulation.
\newblock {\em Journal of Computational and Graphical Statistics (JCGS)}, 16(1):44--66, 2007.

\bibitem[ZQY22]{zhou2022tweedie}
He~Zhou, Wei Qian, and Yi~Yang.
\newblock Tweedie gradient boosting for extremely unbalanced zero-inflated data.
\newblock {\em Communications in Statistics-Simulation and Computation}, 51(9):5507--5529, 2022.

\bibitem[ZS21]{zimmert2021tsallis}
Julian Zimmert and Yevgeny Seldin.
\newblock Tsallis-inf: An optimal algorithm for stochastic and adversarial bandits.
\newblock {\em Journal of Machine Learning Research (JMLR)}, 22(1):1310--1358, 2021.

\end{thebibliography}

\newpage

\appendix

\section{Additional proofs and background results from Section \ref{sec:split}}\label{app:split}

We present here background from prior work in learning theory and additional proof details.

\subsection{GJ algorithm and pseudo-dimension }

A useful technique for bounding the pseudo-dimension in data-driven algorithm design is the GJ framework based approach proposed by \cite{bartlett2022generalization}. We include below the formal details for completeness.

\begin{definition}[\cite{goldberg1993bounding,bartlett2022generalization}]
    A GJ algorithm $\Gamma$ operates on real-valued inputs, and can perform two types of operations:
    \begin{itemize}
        \item Arithmetic operations of the form $v=v_0 \odot v_1$, where $\odot \in \{+, -, \times, \div\}$.
        \item Conditional statements of the form ``if $v \ge 0$ $\dots$ else $\dots$''.
    \end{itemize}
    In both cases, $v_0, v_1$ are either inputs or values previously computed by the algorithm (which are rational functions of the inputs). The {\it degree} of a GJ algorithm is the maximum degree of any rational function it computes of the inputs. The {\it predicate complexity} of a GJ algorithm is the number of distinct rational functions that appear in its conditional statements.
\end{definition}

\noindent The following theorem  due to \cite{bartlett2022generalization}  is useful in obtaining some of our pseudodimension bounds by showing a GJ algorithm that computes the loss for all values of the hyperparameters, on any fixed input instance.

\begin{theorem}[\cite{bartlett2022generalization}] \label{thm:gj}
    Suppose that each function $f \in \cF$ is specified by $n$ real parameters. Suppose that for every $x \in \cX$ and $r \in \R$, there is a GJ algorithm $\Gamma_{x, r}$ that given $f \in \cF$, returns ``true" if $f(x) \geq r$ and ``false" otherwise. Assume that $\Gamma_{x, r}$ has degree $\Delta$ and predicate complexity $\Lambda$. Then, $\mathrm{Pdim}(\cF) = O(n\log(\Delta\Lambda))$.
\end{theorem}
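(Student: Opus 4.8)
The plan is to follow the dual-function / pseudo-dimension route already used for Theorem~\ref{thm:pdim-alpha-beta}, now tracking the full splitting-then-pruning pipeline. Fix a problem instance $D=(X,y)$ and consider the dual loss $\ell_D(\alpha,\beta,\tilde{\alpha}) := L(T_{\mathrm{final}},D) + \eta|\mathrm{leaves}(T_{\mathrm{final}})|$, where $T_{\mathrm{final}}$ is the tree obtained by first growing to size $t$ with the $(\alpha,\beta)$-Tsallis criterion via Algorithm~\ref{alg:td-learning} and then applying min cost-complexity pruning with parameter $\tilde{\alpha}$. Since $\ell_D$ is completely determined by $T_{\mathrm{final}}$, it suffices to bound, on every instance, the number of regions of the parameter domain $(0,\infty)\times[B]\times[0,\infty)$ on which $T_{\mathrm{final}}$ (and hence $\ell_D$) is constant; a bound of the form $\log(\#\text{regions}) = O(t(\log|\cF|+\log t + c\log(B+c)))$ then yields the claimed pseudo-dimension via the piecewise-constant lemma (Lemma~\ref{lem:ddad-1}, used exactly as in the proof of Theorem~\ref{thm:pdim-alpha-beta}), and the sample complexity follows from Theorem~\ref{thm:pdim}.

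The key structural observation is that the two stages decouple. The grown tree $T_{\cF,(\alpha,\beta),t}(D)$ depends only on $(\alpha,\beta)$, so I would reuse the counting from the proof of Theorem~\ref{thm:pdim-alpha-beta} verbatim: for each fixed $\beta\in[B]$, the $\alpha$-axis splits into at most $M_\beta = O(|\cF|^{2t}t^{2t}(\beta+c)^{ct})$ intervals, on each of which Algorithm~\ref{alg:td-learning} makes an identical sequence of split choices and hence produces an identical grown tree (identical structure, node functions, and routing of data to leaves). Consequently, on any one such $\alpha$-interval the pruning input is a single fixed tree $T$, so the cost-complexity quantities $R(l,D)$ and all weakest-link thresholds are constants that do not depend on $\alpha$.

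Next I would analyze the pruning stage for a fixed grown tree $T$. As $\tilde{\alpha}$ increases from $0$ to $\infty$, min cost-complexity pruning produces a nested sequence $T = T^{(0)}\supset T^{(1)}\supset\cdots\supset T^{(m)}$ with breakpoints $0=\tilde{\alpha}_0<\tilde{\alpha}_1<\cdots$; since each weakest-link deletion removes at least one internal node and $T$ has at most $t$ internal nodes, $m\le t$, so the $\tilde{\alpha}$-axis splits into at most $t$ intervals with a constant pruned subtree (this is precisely the piecewise structure behind Theorem~\ref{thm:pdim-alpha-tilde} and Figure~\ref{fig:pruning}). Because the grown tree is constant across each $\alpha$-interval, these breakpoints are fixed numbers within the corresponding vertical strip of the $(\alpha,\tilde{\alpha})$ plane. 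Combining the stages, for each fixed $\beta$ the $(\alpha,\tilde{\alpha})$ plane is partitioned into at most $M_\beta\cdot t$ axis-aligned regions on which $T_{\mathrm{final}}$ is constant. Summing over $\beta\in[B]$ gives at most $\sum_{\beta=1}^{B} M_\beta\cdot t = O\big(B\,t\,|\cF|^{2t}t^{2t}(B+c)^{ct}\big)$ regions, whose logarithm is $O(t(\log|\cF|+\log t + c\log(B+c)))$, matching the desired pseudo-dimension bound.

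Finally, this piecewise-constant structure of $\ell_D$ on every instance bounds the pseudo-dimension of the dual loss class by $O(t(\log|\cF|+\log t + c\log(B+c)))$, and applying Theorem~\ref{thm:pdim} (with the bounded range $[0,\,1+\eta(t+1)]$ of $L_\eta$ absorbed into the constants, treating $\eta$ as a fixed problem parameter) yields the stated sample complexity and hence $\epsilon$-optimality of the ERM parameters. I expect the main obstacle to be cleanly justifying the decoupling/product structure: specifically, verifying that on each $\alpha$-interval the grown tree, and therefore every pruning threshold $\tilde{\alpha}_j$, is genuinely invariant, so that the horizontal cuts are constant within each vertical strip and the region counts multiply rather than compound. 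The $\le t$ bound on the number of pruning pieces and the reuse of the splitting-stage count from Theorem~\ref{thm:pdim-alpha-beta} are the two reusable ingredients that make this argument clean, and the additive $\eta|\mathrm{leaves}(T_{\mathrm{final}})|$ term introduces no new boundaries since it is itself a function of $T_{\mathrm{final}}$.
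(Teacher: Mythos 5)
Your proposal does not prove the statement in question. The statement is Theorem~\ref{thm:gj}, the general Goldberg--Jerrum framework bound of \cite{bartlett2022generalization}: for any function class $\cF$ parameterized by $n$ reals such that each threshold predicate ``$f(x)\geq r$'' can be decided by a GJ algorithm of degree $\Delta$ and predicate complexity $\Lambda$, one has $\mathrm{Pdim}(\cF)=O(n\log(\Delta\Lambda))$. What you have written is instead a proof of Theorem~\ref{thm:pdim-split-prune} (simultaneous tuning of the Tsallis splitting parameters $\alpha,\beta$ and the pruning parameter $\tilde{\alpha}$ for the objective $L_\eta$). Nothing in your argument engages with the objects the statement is about: GJ algorithms, the $n$ real parameters of a generic class $\cF$, the degree $\Delta$, or the predicate complexity $\Lambda$. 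Decision trees, Algorithm~\ref{alg:td-learning}, and cost-complexity pruning play no role in Theorem~\ref{thm:gj}; it is a general learning-theoretic tool (which the paper merely cites, and later invokes in the proof of Theorem~\ref{thm:pdim-c}), not a statement about the decision-tree pipeline.

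A correct proof would have to run along entirely different lines: fix $m$ shattered points $x_1,\dots,x_m$ with witnesses $r_1,\dots,r_m$; observe that each predicate evaluated by $\Gamma_{x_i,r_i}$ is the sign of a polynomial (or rational) expression of degree at most $\Delta$ in the $n$ real parameters, with at most $\Lambda$ distinct predicates per algorithm; then bound the number of realizable sign patterns of these $m\Lambda$ polynomials over $\R^n$ by a Warren/Milnor--Thom-type counting bound, roughly $(O(m\Lambda\Delta/n))^n$; and finally compare with the $2^m$ patterns required by shattering to conclude $m=O(n\log(\Delta\Lambda))$. Since your proposal contains none of these ingredients, it cannot be repaired into a proof of the stated theorem; it is an answer to a different question. (As a separate remark, your argument for Theorem~\ref{thm:pdim-split-prune} itself is essentially the paper's own proof of that result, but that is not what was asked here.)
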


\subsection{Additional details for the $(\alpha,\beta)$-Tsallis entropy family}

\noindent The following lemma establishes some useful properties of the $(\alpha,\beta)$-Tsallis entropy family.

\noindent{\bf Proposition \ref{prop:permissible} (restated)} \textit{
    $(\alpha,\beta)$-Tsallis entropy has the following properties for any $\alpha\in\R^+,\beta\in\mathbb{Z}^+,\alpha\notin (1/\beta,1)$
    \begin{enumerate}
        \item (Symmetry) For any $P=\{p_i\}$, $Q=\{p_{\pi(i)}$ for some permutation $\pi$ over $[c]$, $g^{\textsc{Tsallis}}_{\alpha,\beta}(Q)=g^{\textsc{Tsallis}}_{\alpha,\beta}(P)$.
        \item $g^{\textsc{Tsallis}}_{\alpha,\beta}(P)=0$ at any vertex $p_i=1,p_j=0$ for all $j\ne i$ of the probability simplex $P$.
        \item (Concavity) $g^{\textsc{Tsallis}}_{\alpha,\beta}(aP+(1-a)Q)\ge ag^{\textsc{Tsallis}}_{\alpha,\beta}(P)+(1-a)g^{\textsc{Tsallis}}_{\alpha,\beta}(Q)$ for any $a\in[0,1]$.
    \end{enumerate}
}

\begin{proof}[Proof of Proposition \ref{prop:permissible}] Properties 1 and 2 are readily verified.
    We further show that $(\alpha,\beta)$-Tsallis entropy is concave for $\alpha,\beta>0,\alpha\beta\ge 1$. 
    
    First consider the case $\alpha\ge 1$. We use the fact that the univariate function $f(x)=x^\theta$ is convex for all $\theta\ge 1$. For  any $a\in[0,1]$, $P=\{p_i\}_{i=1}^c, Q=\{q_i\}_{i=1}^c$,
\begin{align*}
    g^{\textsc{Tsallis}}_{\alpha,\beta}(aP+(1-a)Q)&=
    \frac{C}{\alpha-1}\left(1-\left(\sum_{i=1}^c(a{p_i}+(1-a)q_i)^\alpha\right)^\beta\right)\\
    &\ge \frac{C}{\alpha-1}\left(1-\left(\sum_{i=1}^ca{p_i}^\alpha+(1-a)q_i^\alpha\right)^\beta\right)\\&=\frac{C}{\alpha-1}\left(1-\left(a\sum_{i=1}^c{p_i}^\alpha+(1-a)\sum_{i=1}^cq_i^\alpha\right)^\beta\right)\\&\ge\frac{C}{\alpha-1}\!\left(\!1-\!\left(\!a\left(\sum_{i=1}^c{p_i}^\alpha\!\right)^\beta+(1-a)\left(\sum_{i=1}^cq_i^\alpha\!\right)^\beta\right)\!\right)\\
    &=ag^{\textsc{Tsallis}}_{\alpha,\beta}(P)+(1-a)g^{\textsc{Tsallis}}_{\alpha,\beta}(Q).
\end{align*}
It remains to consider the case $0<\alpha\le 1/\beta$. In this case, we apply the reverse Minkowski's inequality and use that $\alpha\beta\le 1$ to establish concavity.
\begin{align*}
    g^{\textsc{Tsallis}}_{\alpha,\beta}(aP+(1-a)Q)&=
    \frac{C}{\alpha-1}\left(1-\left(\sum_{i=1}^c(a{p_i}+(1-a)q_i)^\alpha\right)^\beta\right)\\
    &\ge \frac{C}{\alpha-1}\left(1-\left(\left(\sum_{i=1}^c(a{p_i})^\alpha\right)^{\frac{1}{\alpha}}+\left(\sum_{i=1}^c((1-a)q_i)^\alpha\right)^{\frac{1}{\alpha}}\right)^{\alpha\beta}\right)\\
    &=\frac{C}{\alpha-1}\left(1-\left(a\left(\sum_{i=1}^c{p_i}^\alpha\right)^{\frac{1}{\alpha}}+(1-a)\left(\sum_{i=1}^cq_i^\alpha\right)^{\frac{1}{\alpha}}\right)^{\alpha\beta}\right)\\&\ge\frac{C}{\alpha-1}\!\left(\!1-\!\left(\!a\left(\sum_{i=1}^c{p_i}^\alpha\!\right)^{\frac{1}{\alpha}\cdot\alpha\beta}+(1-a)\left(\sum_{i=1}^cq_i^\alpha\!\right)^{\frac{1}{\alpha}\cdot\alpha\beta}\right)\!\right)\\
    &=ag^{\textsc{Tsallis}}_{\alpha,\beta}(P)+(1-a)g^{\textsc{Tsallis}}_{\alpha,\beta}(Q).
\end{align*}

In this case, we note that the first inequality above holds because $\alpha-1<0$ in this case and $f(x)=x^\theta$ is concave for all $0<\theta< 1$. 

In this case, we recall that $\sum_{i=1}^kp_i=\sum_{i=1}^kq_i=1$ and apply the H\"{o}lder's inequality to observe that

\begin{align*}
1=\left(\sum_{i=1}^k(a{p_i}+(1-a)q_i)\right)&\le \left(\sum_{i=1}^k(a{p_i}+(1-a)q_i)^\alpha\right)^\frac{1}{\alpha}\left(\sum_{i=1}^k1^{1-\frac{1}{\alpha}}\right)^{1-\frac{1}{\alpha}}\\
&=\left(\sum_{i=1}^k(a{p_i}+(1-a)q_i)^\alpha\right)^\frac{1}{\alpha}k^{\frac{\alpha-1}{\alpha}}\\
&\le \left(\sum_{i=1}^k(a{p_i}+(1-a)q_i)^\alpha\right)^\frac{1}{\alpha}.
\end{align*}
Further, by convexity of $l_p$-norm for any $p>1$, 
$$\left(\sum_{i=1}^k(a{x_i}+(1-a)y_i)^p\right)^{\frac{1}{p}}\le a\left(\sum_{i=1}^k{x_i}^p\right)^{\frac{1}{p}}+(1-a)\left(\sum_{i=1}^k{y_i}^p\right)^{\frac{1}{p}}.$$
Set $p'=1/p$,  $p_i=x_i^p$ and $q_i=y_i^p$,
$$\left(\sum_{i=1}^k(a{p_i}^{p'}+(1-a)q_i^{p'})^{1/p'}\right)^{p'}\le a\left(\sum_{i=1}^k{p_i}\right)^{p'}+(1-a)\left(\sum_{i=1}^k{q_i}\right)^{p'}.$$

\end{proof}

{
\noindent To prove Theorem  \ref{thm:pdim-alpha-beta}, we state below a simple helpful lemma, which is a simple consequence of the Rolle's Theorem.

\begin{lemma}[See e.g.\ Lemma 26 in \cite{balcan2021data}]\label{lem:exp-roots}
    The equation $\sum^n_{i=1} a_ie^{b_ix} = 0$ where $a_i, b_i \in \R$ has at most $n - 1$ distinct solutions $x \in \R$.
\end{lemma}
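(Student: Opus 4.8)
The plan is to prove this by induction on $n$, using Rolle's theorem as the engine, after a preliminary normalization. First I would reduce to a clean canonical form: we may assume without loss of generality that the exponents $b_i$ are pairwise distinct (terms sharing an exponent can be merged into one), that every coefficient $a_i$ is nonzero (terms with $a_i = 0$ are dropped), and that the function is not identically zero (by linear independence of exponentials with distinct exponents, the only way $\sum_i a_i e^{b_i x}\equiv 0$ is for all surviving coefficients to vanish, which this reduction has removed). Since merging and dropping terms only decreases the count, it suffices to prove the sharper statement that $f(x) = \sum_{i=1}^n a_i e^{b_i x}$ with pairwise distinct $b_i$ and all $a_i\neq 0$ has at most $n-1$ real zeros; the original bound of $n-1$ then follows a fortiori.

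For the base case $n=1$, the function $f(x) = a_1 e^{b_1 x}$ with $a_1\neq 0$ is strictly positive or strictly negative everywhere, hence has $0 = n-1$ zeros, as required.

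For the inductive step I would assume the claim for $n-1$ terms and consider $f$ with $n$ terms. The key trick is to multiply $f$ by the everywhere-positive factor $e^{-b_n x}$, which preserves the zero set exactly, giving $g(x) = a_n + \sum_{i=1}^{n-1} a_i e^{(b_i - b_n) x}$. Differentiating annihilates the constant term: $g'(x) = \sum_{i=1}^{n-1} a_i (b_i - b_n)\, e^{(b_i - b_n) x}$. This is again an exponential sum, now with only $n-1$ terms, whose exponents $b_i - b_n$ remain pairwise distinct and whose coefficients $a_i(b_i - b_n)$ are all nonzero (because the $b_i$ are distinct and the $a_i$ are nonzero), so the inductive hypothesis applies and gives $g'$ at most $n-2$ real zeros. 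Finally, if $g$ (equivalently $f$) had $k$ distinct real zeros $x_1 < \dots < x_k$, Rolle's theorem produces a zero of $g'$ in each of the $k-1$ disjoint open intervals $(x_j, x_{j+1})$, so $g'$ has at least $k-1$ distinct zeros; hence $k-1 \le n-2$, i.e.\ $k \le n-1$, which closes the induction.

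The only delicate points, and where the care of the write-up belongs, are the bookkeeping in the normalization step — confirming that merging equal exponents and discarding zero coefficients never increases the term count, so the target bound is not weakened — and the verification that differentiation preserves both the distinctness of the exponents and the nonvanishing of the coefficients, since this is precisely what licenses applying the inductive hypothesis to $g'$. Neither is a genuine obstacle: the whole argument is a clean Rolle's-theorem induction, which is presumably why the paper is content to cite it as a known lemma.
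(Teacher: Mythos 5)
Your proof is correct and follows exactly the approach the paper intends: the paper does not prove this lemma itself but cites it (Lemma 26 of \cite{balcan2021data}) as ``a simple consequence of the Rolle's Theorem,'' and the standard proof of that cited lemma is precisely your argument---normalize to distinct exponents and nonzero coefficients, factor out one exponential, differentiate to kill the constant term, and induct via Rolle. Your normalization step also correctly flags the degenerate case (an identically zero sum) under which the statement as literally written would fail, which the paper's statement implicitly excludes.
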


\section{Additional experiments}
We include further experiments below. In the following, first we 
will further examine the tuning of $(\alpha,\beta)$-Tsallis entropy on additional datasets. Next we describe and empirically compare different pruning methods beyond those discussed in the main body.

\begin{figure*}[t]
\centering
\includegraphics[width=0.9\textwidth]{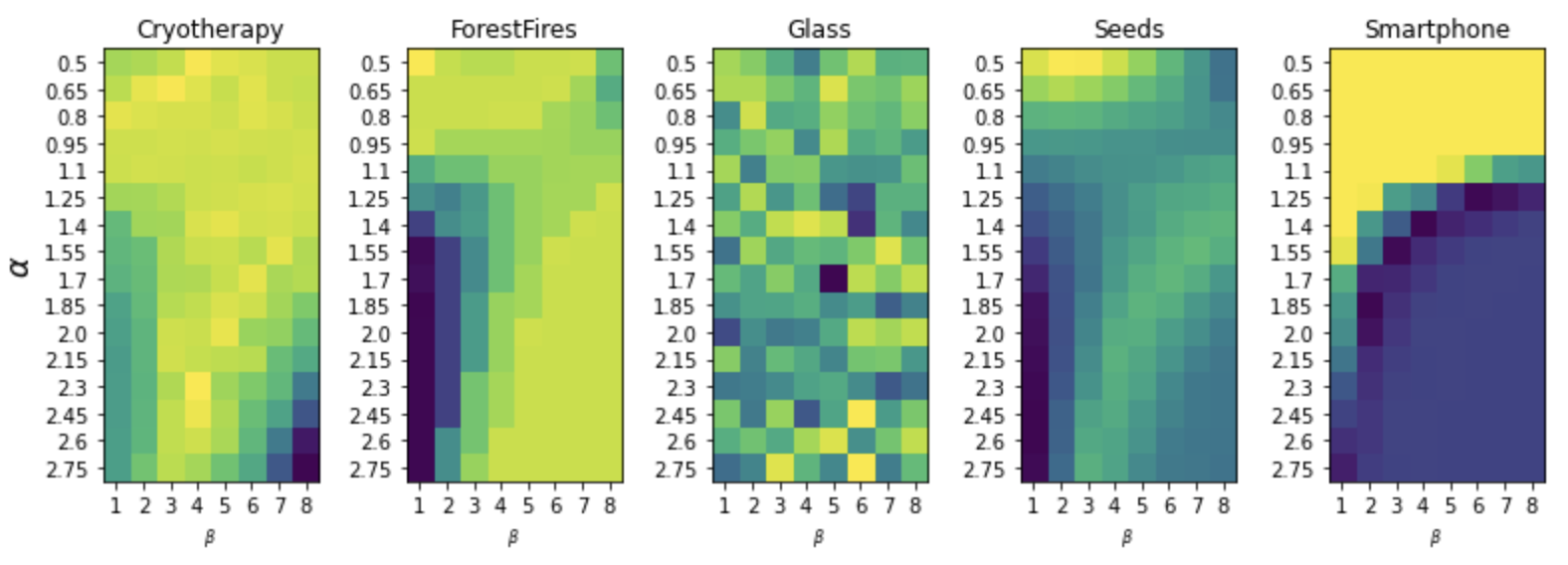} 
\caption{Average test accuracy (proportional to brightness) of $(\alpha,\beta)$-Tsallis entropy based splitting criterion across additional datasets.}
\label{fig:tsallis1}
\end{figure*}

\subsection{$(\alpha,\beta)$-Tsallis entropy on additional datasets}

We consider  additional datasets from the UCI repository and examine the best setting of $(\alpha,\beta)$ in the splitting criterion. The results are depicted in Figure \ref{fig:tsallis1} and summarized below.

Seeds \cite{misc_seeds_236} involves 3 classes of wheat, and has 210 instances with 7 attributes each. The splitting criterion due to \cite{kearns1996boosting} seems to work best here. Note that the original work only studied binary classification, and seeds involves three label classes and therefore our experiment involves a natural generalization of \cite{kearns1996boosting} to $g^{\textsc{Tsallis}}_{\frac{1}{2},2}(\cdot)$.

Cryotherapy \cite{misc_cryotherapy_dataset__429} has 90 instances with 7 real or integral attributes and contains the binary label of whether a wart was suffessfully treated using cryotherapy. Here $\alpha=0.5$ with $\beta=4$ is one of the best settings, indicating usefulness of varying the $\beta$ exponent in the KM96 criterion.

Glass identification \cite{misc_glass_identification_42} involves classification into six types of glass defined in terms of their oxide content. There are 214 instances with 9 real-valued features. Interestingly, the best performance is observed when both $\alpha$ and $\beta$ are larger than their typical values in popular criteria. For example, $(\alpha,\beta)=2.45,6$ works well here.

Algerian forest fires involves binary classification with 12 attributes and 244 instances. Gini entropy by itself does poorly, but augmented with the $\beta$-parameter the performance improves significantly and beats other candidate approaches for $\beta=8$.

Human activity detection using smartphones \cite{misc_human_activity_recognition_using_smartphones_240} is a 6-way classification dataset consisting of smartphone accelerometer and gyroscope readings corresponding to different activities, with 10299 instances with 561 features. Smaller values of $\alpha$ work better on this dataset, and the dependence on $\beta$ is weaker.

\subsection{Pruning experiments: accuracies of additional different methods}

We  examine the effectiveness of learning to prune by comparing the accuracy of pruning using the learned parameter $\tilde{\alpha}$ in  mininum cost-complexity pruning algorithm  with other baseline methods studied in the literature. Prior literature on empirical studies on pruning methods has shown that different pruning methods can work best for different datasets \cite{mingers1989empirical,esposito1997comparative}. This indicates that a practitioner should try out several pruning methods in order to obtain the best result for given domain-specific data. Here we will show that a well-tuned pruning from a single algorithm family can be competitive, and allows us to automate this process of manual selection of the pruning algorithm.

\begin{table*}[t]

\caption{A comparison of the mean test accuracy of decision trees obtained using different pruning methods.}
\centering

\begin{tabular}{l ccccc}
\toprule
Dataset &
 Acc(Unpruned) &
 Acc$(\tilde{\alpha}^*)$ in MCCP &
 Acc(REP)&
 Acc(TDP) &
 Acc(BUP)\\
\midrule
Iris &
$80.03$ &
$97.37$ &
 $96.67$ &
$90.00$ &
$93.33$ \\
Digits &
$84.44$ &
$89.42$ &
 $86.67$ &
$83.61$ &
$88.89$ \\
Breast cancer &
 $87.72$&
 $93.71$&
$92.98$  &
 $91.23$ &
$92.11$ \\
Wine &
$80.56$ &
$94.44$ &
 $91.67$ &
$88.89$ &
$86.11$ \\
\bottomrule
\end{tabular}
\label{tab:pru}
\end{table*}

We perform our experiments on benchmark datasets from the UCI repository, including Iris, Wine, Breast Cancer and Digits datasets. We split the datasets into train-test sets, using 80\% instances for training and 20\% for testing. In each case, we build the tree using entropy as the splitting criterion. We compare the mean accuracy on the test sets over 50 different splits for the following methods:

\begin{itemize}
    \item Unpruned, that is no pruning method is used.
    \item $\tilde{\alpha}^*$ in MCCP. Min-cost complexity pruning using the best parameter $\tilde{\alpha}^*$ for the dataset.
    \item REP, Reduced error pruning method of \cite{quinlan1987simplifying}.
    \item TDP, Top-down pessimistic pruning method of \cite{quinlan1986induction}.
    \item BUP, Bottom-up pessimistic pruning method of \cite{mansour1997pessimistic}.
\end{itemize}

\noindent We report our findings in Table \ref{tab:pru}. We observe that the learned pruning method has a better mean test accuracy than other baseline methods on the tested datasets.

This highlights the advantage of using a data-driven approach to tune tuning the hyperparameters---instead of trying out different pruning approaches from the literature, one could just tune the hyperparameter in the Mininum Cost-Complexity Pruning method. Further, it is an interesting question for future work to come up with more general parametric families for pruning decision trees that interpolate the different approaches from the literature that we test above.

\end{document}